\let\OldStatex\Statex
\renewcommand{\Statex}[1][0]{%
  \setlength\@tempdima{\algorithmicindent}%
  \OldStatex\hskip\dimexpr#1\@tempdima\relax}
\algnewcommand\algorithmicinput{\textbf{Input:}}
\algnewcommand\Input{\item[\algorithmicinput]}
\algnewcommand\algorithmicoutput{\textbf{Output:}}
\algnewcommand\Output{\item[\algorithmicoutput]}
\newcommand{\ubar}[1]{\underaccent{\bar}{#1}}
\newcommand{\R}{\mathbb{R}}
\DeclareMathOperator{\Sym}{Sym}
\def \PSD{\mathbb{S}}
\DeclareMathOperator{\tr}{tr}
\DeclareMathOperator{\Diag}{Diag}
\DeclareMathOperator \vect{vec}
\def \transpose{^\mathsf{T}}
\def \inv{^{-1}}
\def \pinv {^\dagger}
\DeclareMathOperator{\image}{image}
\DeclareMathOperator{\Graph}{graph}
\DeclareMathOperator{\Stiefel}{St}
\DeclareMathOperator{\proj}{Proj}
\DeclareMathOperator{\Orthogonal}{O}
\DeclareMathOperator{\SO}{SO}
\DeclareMathOperator{\SE}{SE}
\DeclareMathOperator{\Gaussian}{\mathcal{N}}  %
\DeclareMathOperator{\Langevin}{Langevin}  %
\def \Graph {\mathcal{G}}  %
\def \Nodes {\mathcal{V}}  %
\def \Edges {\mathcal{E}}  %
\newcommand{\directed}[1]{\vec{#1}}  %
\def \dEdges{\directed{\Edges}}  %
\def \edge{\lbrace i,j \rbrace}  %
\def \dedge{(i,j)}  %
\def \incEdges{\delta}
\def \outEdges{\delta^{-}}
\def\Lap{L}  %
\theoremstyle{definition}
\newtheorem{problem}{Problem}
\DeclareMathOperator{\BDiag}{BlockDiag}
\DeclareMathOperator{\SymBlockDiag}{SymBlockDiag}
\DeclareMathOperator{\SBD}{SBD}  %
\def \pose{x}
\def \tran{t}
\def \rot{R}
\newcommand{\true}[1]{\ubar{#1}}
\newcommand{\noisy}[1]{\tilde{#1}}
\def \tpose{\true{\pose}}
\def \ttran{\true{\tran}}
\def \trot{\true{\rot}}
\def \optsym{*}
\def \initsym{{(0)}}
\def \Rinit{\rot^\initsym}
\def \Rhat{\hat{\rot}}
\def \Ropt{\rot^\optsym}
\def \npose{\noisy{\pose}}
\def \ntran{\noisy{\tran}}
\def \nrot{\noisy{\rot}}
\def \rotsym{\rho}  %
\def \transym{\tau}  %
\def \TranW{W^{\transym}}  %
\def \LapTranW{\Lap(\TranW)}  
\def \RotW{W^{\rotsym}}  %
\def \LapRotW{\Lap(\RotW)}  
\def \MeasRotGraph{\noisy{G}^\rotsym}  %
\def \TrueRotGraph{\true{G}^\rotsym}   %
\def \MeasRotConLap{\Lap(\MeasRotGraph)}  %
\def \TrueRotConLap{\Lap(\TrueRotGraph)}  %
\def \RotConLapDev{\Delta \MeasRotConLap}  %
\def \nCrossTerms{\noisy{V}}  %
\def \nOuterProducts{\noisy{\Sigma}}  %
\def \tQ{\true{Q}}
\def \nQ{\noisy{Q}}  %
\def \dQ{\Delta Q}
\def \nQtran{\noisy{Q}^\transym}  %
\def \tQtran{\true{Q}^\transym}  %
\def \Oorbdist{d_{\mathcal{O}}}
\def \Sorbdist{d_{\mathcal{S}}}
\def \eigvecs{Y^{\optsym}}
\def \projsym{\Pi}
\newcommand{\SOrounded}[1]{\projsym_{\mathcal{S}}(#1)}
\def \moreiraR{\tilde{M}}
\def \moreiratR{\ubar{M}}
\def \moreiraL{\mathcal{L}}
\def \Cube{Cube}
\def \Sphere{Sphere}
\def \Grid{Grid}
\def \Torus{Torus}
\def \Garage{Garage}
\def \eigvecSpace{\mathcal{Y}}
\theoremstyle{plain}
\newtheorem{thm}{Theorem}
\newtheorem{cor}[thm]{Corollary}
\newtheorem{lem}[thm]{Lemma}
\theoremstyle{definition}
\newtheorem{definition}[thm]{Definition}
\theoremstyle{remark}
\def \CSAIL{Computer Science and Artificial Intelligence Laboratory}
\def \LIDS{Laboratory for Information and Decision Systems}
\def \MIT{Massachusetts Institute of Technology}
\def \MITaddr{Cambridge, MA 02139, USA}
\author{Kevin J. Doherty\thanks{Corresponding author. Email:
    \href{mailto:kjd@csail.mit.edu}{{\tt kjd@csail.mit.edu}}}}
\affil{\CSAIL, \MIT, \MITaddr}
\title{Performance Guarantees for Spectral Initialization in Rotation Averaging and Pose-Graph SLAM}
\author[2]{David M.\ Rosen}
\affil{\LIDS, \MIT, \MITaddr}
\author[1]{John J.\ Leonard}
\date{}
\begin{document}
\maketitle

\begin{abstract}
  In this work we present the first initialization methods equipped with
  \emph{explicit performance guarantees} that are adapted to the pose-graph
  simultaneous localization and mapping (SLAM) and rotation averaging (RA)
  problems. SLAM and rotation averaging are typically formalized as large-scale
  nonconvex point estimation problems, with many bad local minima that can
  entrap the smooth optimization methods typically applied to solve them; the
  performance of standard SLAM and RA algorithms thus crucially depends upon the
  quality of the estimates used to initialize this local search. While many
  initialization methods for SLAM and RA have appeared in the literature, these
  are typically obtained as purely heuristic approximations, making it difficult
  to determine whether (or under what circumstances) these techniques can be
  reliably deployed. In contrast, in this work we study the problem of
  initialization through the lens of \emph{spectral relaxation}. Specifically,
  we derive a simple spectral relaxation of SLAM and RA, the form of which
  enables us to exploit classical linear-algebraic techniques (eigenvector
  perturbation bounds) to control the distance from our spectral estimate to
  \emph{both} the (unknown) ground-truth \emph{and} the global minimizer of the
  estimation problem as a function of measurement noise. Our results reveal the
  critical role that spectral graph-theoretic properties of the measurement
  network play in controlling estimation accuracy; moreover, as a by-product of
  our analysis we obtain new bounds on the estimation error for the
  \emph{maximum likelihood} estimators in SLAM and RA, which are likely to be of
  independent interest. Finally, we show experimentally that our spectral
  estimator is very effective in practice, producing initializations of
  comparable or superior quality at lower computational cost compared to
  existing state-of-the-art techniques.
\end{abstract}

\newpage

\tableofcontents

\newpage

\section{Introduction}\label{sec:intro}

Simultaneous localization and mapping (SLAM) is the process by which a robot
jointly infers its pose and the location of environmental landmarks; this is a
fundamental capability of mobile robots, supporting navigation, planning, and
control \cite{rosen2021advances}.  State-of-the-art methods typically formalize 
SLAM and rotation averaging as large-scale M-estimation problems, and then apply smooth 
first- or second-order local optimization methods to efficiently recover a point estimate.
However, the fact that robot orientations are elements of the special orthogonal group $\SO(d)$, a nonconvex set, makes these estimation problems inherently \emph{nonconvex}, with many bad local minima that can entrap the local optimization methods commonly applied to solve them.  The performance of standard SLAM and RA algorithms thus crucially depends upon the quality of the estimates used to initialize the local search.
In consequence, a great deal of prior work has been dedicated to the development of initialization techniques (see
\citet{carlone2015initialization} for a review). While many of these techniques often work well in practice, the fact that they are obtained as heuristic approximations makes it difficult to ascertain \emph{what specific features} of SLAM or RA problems determine their performance. As a result, it is difficult to say when, or under
what conditions, these techniques can be \emph{reliably} deployed.

In this work, we propose a simple \emph{spectral initialization} method for
pose-graph SLAM and rotation averaging that we prove enjoys \emph{explicit
  performance guarantees}. To the best of our knowledge, these are the first
concrete guarantees to appear in the literature for any initialization technique
adapted to these applications. Our analysis gives direct control over the
estimation error of a spectral initialization in terms of the spectral
properties of the measurement network.\footnote{Recent work has identified
  spectral properties of measurement networks as key quantities controlling the
  performance of estimators for these problems, though this connection
  (particularly in the context of SLAM) remains under-explored (see
  \cite{rosen2021advances} for a recent review).} This allows us to control the
distance from the spectral estimate to the global minimizer of the estimation
problem; this is critical for ensuring that the initialization lies in the
locally convex region around the global minimizer, and therefore that this
minimizer can be recovered by a subsequent local refinement (see Figure
\ref{fig:diagram}). Our proof of this result relies on new estimation error
bounds for the global minimizers (i.e.\ the \emph{maximum likelihood}
estimators) of SLAM and rotation averaging problems, which are likely to be of
independent interest. Algorithmically, our approach only requires computing the
first few eigenpairs of a symmetric matrix, which can be achieved using any
off-the-shelf implementation of the Lanczos method (e.g. the MATLAB
\texttt{eigs} command). Our empirical results on both synthetic data and
standard pose-graph SLAM benchmarks demonstrate that the spectral estimator
typically performs far better than our worst-case analysis suggests, achieving
solution quality and computation times competitive with state-of-the-art
approaches. Beyond its utility as an \emph{initialization method} for
M-estimation, our results show that spectral relaxation provides an inexpensive
method for rotation averaging and pose-graph optimization in its own right
(i.e.\ \emph{without} the need to perform subsequent nonconvex optimization or
semidefinite relaxation) that attains an asymptotic error bound comparable to
the (globally optimal) M-estimator, and provides near-optimal estimates in
practice.

The remainder of the paper proceeds as follows: In Section
\ref{sec:related-work}, we discuss related literature on robot perception and
rotation averaging. Section \ref{sec:problem-formulation} formalizes the estimation problem, and Section \ref{sec:spectral-init} describes our
spectral initialization procedure. In Section \ref{sec:main-results} we present
our main results: an analysis controlling the estimation error of both the
spectral initialization  and the global minimizer for the
rotation averaging and pose-graph SLAM problems, as well as a bound on the
distance between the spectral initialization and the globally optimal solution.
Section \ref{sec:experimental-results} demonstrates the empirical performance of
our spectral estimator on benchmark SLAM datasets, together with
our performance bound evaluated on synthetic data.  These results show, in particular, that
the spectral estimator is competitive with state-of-the-art techniques for
initialization.

\begin{figure}[t]
  \centering
  \includegraphics[width=0.6\columnwidth]{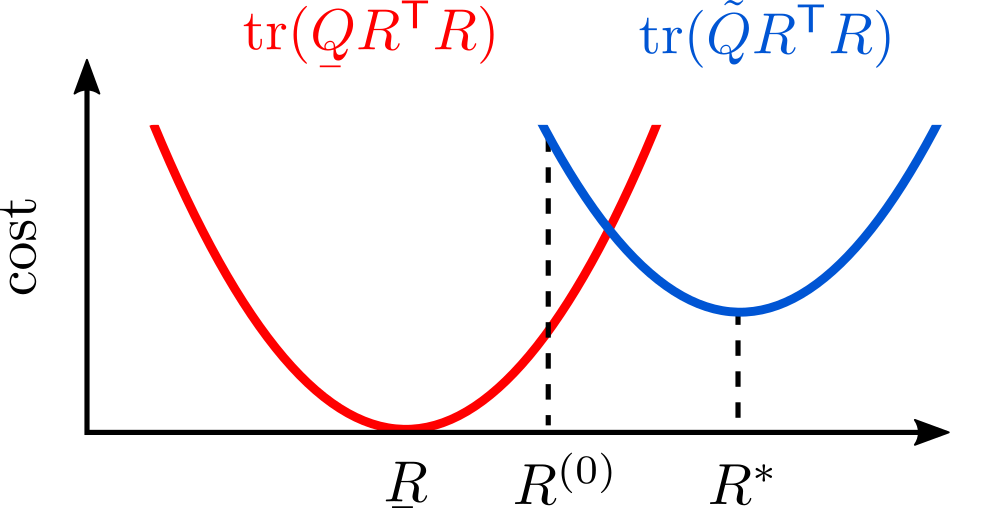}
  \caption{\textbf{Comparing true, optimal, and initial rotation estimates.} We
    are interested in bounds on the deviation of an initial estimate $\Rinit$
    from the (latent) ground truth $\protect\trot$ and the globally optimal
    solution $\Ropt$. \label{fig:diagram}}
\end{figure}

\section{Related work}\label{sec:related-work}

Simultaneous localization and mapping and rotation averaging problems are often
formulated as high-dimensional, nonconvex optimization problems. Consequently,
solving these problems typically requires efficient algorithms for producing an
``initial guess.'' Historically, research on this topic has focused on
developing cheap, but typically inexact, convex or linear relaxations of the
SLAM (resp.\ rotation averaging) problems (e.g.\
\cite{carlone2015initialization, martinec2007robust}). While these techniques
often work well in practice, the fact that they are obtained as heuristic
approximations makes it difficult to ascertain \emph{what specific features} of
SLAM or RA problems determine their performance. Consequently, it is difficult
to assess \emph{under what conditions} these techniques can be reliably
deployed.

A related line of research is the development of Cram\'er-Rao bounds for the
pose-graph SLAM and rotation averaging problems
\cite{boumal2014cramer,khosoussi2014novel,chen2021cramer}; these works provide
\emph{lower bounds} on the achievable estimation error \emph{in expectation}. In
this work, we derive a complementary set of \emph{upper bounds} on the
estimation error on a \emph{per instance} basis. Interestingly, our estimation
error \emph{upper} bounds depend upon precisely the same spectral quantities as
do the Cram\'er-Rao (\emph{lower}) bounds, indicating that graph spectra are
objects of central importance in understanding the statistical properties of
SLAM and RA estimators.

The spectral relaxation approach to initialization that we consider has
previously appeared in other problem settings, particularly in the area of phase
synchronization problems (cf. \cite{boumal2013robust, ling2020near,
  boumal2016nonconvex, singer2011angular}). In particular, \citet{ling2020near}
describe error bounds that are qualitatively similar to those described in this
paper, though theirs are concerned specifically with \emph{orthogonal} group
synchronization problems. \citet{liu2020unified} take a similar approach to ours
in order to derive error bounds for spectral estimators of synchronization
problems defined over subgroups of the orthogonal group (including $\SO(d)$),
but employ a different definition of the perturbation than the one we consider
here. As we will show, our notion of perturbation has the advantage that it
follows naturally from a generative model of SLAM and RA, and furthermore,
directly reveals the spectral properties of the measurement network
(specifically, a kind of generalized algebraic connectivity) as the key
quantities controlling the worst-case performance of our spectral initialization
method.

Recently, \citet{moreira2021fast} proposed a computationally-efficient
Krylov-Schur decomposition approach for pose-graph SLAM. We show in Appendix
\ref{app:moreira-comparison} that their method is formally equivalent to a
\emph{special case} of our approach (namely, an \emph{unweighted},
\emph{rotation-only} variant of our spectral initialization procedure). However,
our construction arises more naturally from spectral relaxation, and
additionally allows for the incorporation of translational measurements, which
we show in Section \ref{sec:experimental-results} can have a significant impact
on estimation quality. \citet{arrigoni2016spectral} also describe a spectral
method for $\SE(d)$-synchronization. While an analysis similar to ours could
likewise be carried out for their method, the form of the relaxation they
consider would lead to more complicated bounds due to a dependence on the scale
of the translational states. Finally, \citet{boots2013spectral} consider
spectral techniques for the range-only SLAM problem. Though their problem
setting differs from the one considered here, extension of the techniques
presented in this work to scenarios with different types of measurement models
is an interesting area for future work.

Finally, \emph{certifiably-correct} machine perception has emerged as a key area of
interest to the robotics community, resulting in the development of algorithms
capable of directly computing \emph{globally optimal} solutions of certain nonconvex
estimation problems under moderate noise \cite{carlone2015lagrangian,
  carlone2016planar, rosen2019se, fan2020cpl, briales2017cartan,
  dellaert2020shonan, tian2019distributed}. Our analysis provides new bounds on the estimation error of the maximum likelihood estimators recovered by these techniques in terms of the magnitude of the
measurement noise. Moreover, the bounds we present suggest that when these
estimators, which are often based on large-scale semidefinite relaxations, do
 attain globally optimal solutions, the resulting estimates have error
bounds that \emph{match} (up to small constant factors) the error bounds we
derive for our spectral initialization, which is easily implemented and computationally inexpensive.

\section{Preliminaries and formulation}\label{sec:problem-formulation}

\subsection{Notation and preliminaries}

\paragraph{Lie groups and matrix manifolds:}

We will make use of the matrix realizations of several Lie groups, most
prominently the $d$-dimensional special Euclidean and special orthogonal groups,
denoted $\SE(d)$ and $\SO(d)$, respectively. $\SE(d)$ can be realized as a
matrix group according to:
\begin{equation}
  \SE(d) \triangleq \left\{ \begin{bmatrix} R & t \\ 0 & 1\end{bmatrix} \in \R^{(d+1)\times (d+1)} \mid R \in \SO(d),\ t \in \R^d \right\},
\end{equation}
and the group $\SO(d)$ can be realized as:
\begin{equation}
  \SO(d) \triangleq \left\{ R \in \R^{d\times d} \mid R^TR = I_d,\ \det(R) = 1\right\},
\end{equation}
where $I_d$ is the $(d \times d)$ identity matrix. The \emph{Stiefel manifold}
$\Stiefel(k, n)$ is the set of orthonormal $k$-frames in $\R^n (k \leq n)$:
\begin{equation}
  \Stiefel(k, n) \triangleq \left\{ V \in \R^{n \times k} \mid V\transpose V = I_k \right\}.  \label{eq:stiefel-def}
\end{equation}

\paragraph{Linear algebra:}

For a symmetric matrix $S$, $S \succeq 0$ denotes that $S$ is
positive-semidefinite. The eigenvalues of a symmetric matrix $S \in \R^{n \times
  n}$ are denoted $\lambda_{1}(S) \leq \lambda_{2}(S) \leq \ldots \leq
\lambda_{n}(S)$. We will also consider several block-structured matrices, and
make use of a few special operators acting on them. Following the notation of
\citet{rosen2019se}, given square matrices $A_i \in \R^{d \times d}, i = 1,
\ldots, n$, we let $\Diag(A_1, \ldots. A_n)$ denote the matrix direct sum (i.e.,
the block-diagonal matrix having $A_1, \ldots, A_n$ as its diagonal blocks).
Furthermore, given a block-structured matrix $B$, let $\BDiag_{d}(B)$ denote the
operator extracting a $d \times d$ block-diagonal matrix from $B$. Finally, let
$\SBD(d, n)$ denote the set of $dn \times dn$ symmetric block-diagonal matrices
with diagonal blocks of size $d \times d$, and $\SymBlockDiag_d(A)$ be the
operator extracting the symmetrization of the $d \times d$ block-diagonal part
of $A$.

\paragraph{Probability and statistics:}

We denote the multivariate Gaussian distribution with mean $\mu \in \R^d$ and
covariance $\Sigma \in \PSD_{+}^d$ as $\Gaussian(\mu, \Sigma)$. We denote the
isotropic Langevin distribution on $\SO(d)$ with mode $M \in \SO(d)$ and
concentration parameter $\kappa \geq 0$ as $\Langevin(M, \kappa)$; this is the
distribution whose probability density function is:
\begin{equation}
  p(R; M, \kappa) = \frac{1}{c_d(\kappa)} \exp \left( \kappa \tr \left( M^TR \right) \right),
\end{equation}
with respect to the Haar measure on $\SO(d)$, with $c_d(\kappa)$ a normalization
constant.

Finally, for an unknown variable $Z$ we aim to infer, we denote its true latent
value by $\true{Z}$ and a noisy measurement of $Z$ by $\noisy{Z}$.

\paragraph{Gauge-invariant distance metrics:}

A key property of rotation averaging and pose-graph optimization is that, as
synchronization problems, they admit infinitely many solutions due to
\emph{gauge symmetry}. In particular, we will see that if $\Ropt \in \SO(d)^n$
is an optimal estimate of the rotational states, then $G \Ropt$ is also optimal
for any $G \in \SO(d)$. We therefore define the following \emph{orbit distances}
in order to compare solutions to the problems in a symmetry-aware manner:
\begin{subequations}
  \begin{equation}
    \Sorbdist(X, Y) \triangleq \min_{G \in \SO(d)} \| X - GY \|_F, \quad X,Y \in \SO(d)^n
  \end{equation}
  \begin{equation}
    \Oorbdist(X, Y) \triangleq \min_{G \in \Orthogonal(d)} \| X - GY \|_F, \quad X,Y \in \Orthogonal(d)^n.
  \end{equation}
\end{subequations}
It will be convenient to ``overload'' the $\Orthogonal(d)$ orbit distance to act
on elements of the set $\eigvecSpace \triangleq \left\{ Y \in \R^{d \times dn}
  \mid YY\transpose = nI_d \right\}$.\footnote{The elements of $\eigvecSpace$
  admit a straightforward interpretation as transposed and re-scaled elements of
  the Stiefel manifold $\Stiefel(d, dn)$ (see \eqref{eq:stiefel-def}).} That is,
for $X, Y \in \eigvecSpace$:
\begin{equation}
  \Oorbdist(X, Y) \triangleq \min_{G \in \Orthogonal(d)} \| X - GY \|_F.
\end{equation}
Each of these distances can be computed in closed form by means of a singular value decomposition (see \citet[Theorem 5]{rosen2019se}).

\subsection{Problem formulation}

We consider the problem of synchronization over the $\SO(d)$ group: this is the
problem of estimating $n$ unknown values $\rot_1, \ldots, \rot_n \in \SO(d)$
given a set of noisy measurements $\nrot_{ij}$ of a subset of their pairwise
relative rotations $\trot_{ij} \triangleq \trot_i\inv \trot_j$. The problem of $\SO(d)$-synchronization captures, in
particular, the problems of rotation averaging and, under common modeling
assumptions, pose graph optimization (as we show in Problem \ref{prob:rot-sync}
and equation \eqref{eq:nq-pgo}), where the variables of interest are the
orientations of a robot (or more generally, a rigid body) at different points in
time (see, for example \citet{grisetti2010tutorial}). This problem possesses a
natural graphical structure $\Graph \triangleq (\Nodes, \dEdges)$, where nodes
$\Nodes$ correspond to latent variables $\rot_i \in \SO(d)$ and edges $(i,j) \in
\dEdges$ correspond to (noisy) measured relative rotations $\nrot_{ij}$ between
$\rot_i$ and $\rot_j$. In particular, for the problem of \emph{rotation
  averaging}, we adopt the following standard generative model for rotation
measurements: For each edge $\dedge \in \dEdges$, we sample a noisy relative measurement $\nrot_{ij}$ according to (cf.\ \cite{rosen2019se,
  dellaert2020shonan}):
\begin{equation}\label{eq:gen-model-ra}
  \nrot_{ij} = \trot_{ij}\rot_{ij}^\epsilon, \quad \rot_{ij}^{\epsilon} \sim \Langevin(I_d, \kappa_{ij}).
\end{equation}
Given a set of noisy pairwise relative rotations $\nrot_{ij}$ sampled according to the generative model
\eqref{eq:gen-model-ra}, a maximum likelihood estimate $\Ropt \in \SO(d)^n$ for
the latent rotational states $\rot_1, \ldots, \rot_n$ is obtained as a minimizer
of the following problem \cite{rosen2019se, dellaert2020shonan}:
\begin{problem}[Maximum likelihood estimation for rotation averaging]
  \label{ra-mle}
  \begin{equation}
    \min_{\rot_i \in \SO(d)} \sum_{(i,j) \in \dEdges} \kappa_{ij} \| \rot_j - \rot_i\nrot_{ij}\|^2_F.
  \end{equation}
\end{problem}

For pose-graph SLAM ($\SE(d)$-synchronization), we adopt the following
generative model for rotation and translation measurements: For each edge
$\dedge \in \dEdges$, we sample a noisy relative measurement $\npose_{ij} = (\ntran_{ij}, \nrot_{ij}) \in \SE(d)$ according to:
\begin{subequations}
  \begin{equation}
    \nrot_{ij} = \trot_{ij}\rot_{ij}^\epsilon, \quad \rot_{ij}^{\epsilon} \sim \Langevin(I_d, \kappa_{ij})
  \end{equation}
  \begin{equation}
    \ntran_{ij} = \ttran_{ij} + \tran_{ij}^\epsilon, \quad \tran_{ij}^{\epsilon} \sim \Gaussian(0, \transym_{ij}^{-1}I_d),
  \end{equation}
  \label{eq:gen-model-pgo}%
\end{subequations}
where $\tpose_{ij} =  \tpose_i^{-1}\tpose_j = (\ttran_{ij}, \trot_{ij})$ is the true relative transformation from $x_i$ to $x_j$. Under this noise model, a maximum likelihood estimate $x^{\optsym} \in \SE(d)^n$ for the latent states $x_1, \ldots, x_n$ is obtained as a minimizer
of the following problem \cite{rosen2019se}:
\begin{problem}[Maximum likelihood estimation for $\SE(d)$ synchronization]
  \label{se-mle}
  \begin{equation}
    \min_{\substack{{\tran_i \in \R^d} \\ {\rot_i \in \SO(d)}}} \sum_{(i,j) \in \dEdges} \kappa_{ij} \| \rot_j - \rot_i\nrot_{ij}\|^2_F + \transym_{ij}\|t_j - t_i - R_i\ntran_{ij}\|_2^2.
  \end{equation}%
\end{problem}

Note that under these modeling assumptions, both pose-graph optimization and rotation
averaging can be written as particular instances of the following general optimization problem:
\begin{problem}[Quadratic minimization over $\SO(d)^n$]\label{prob:rot-sync}
  \begin{equation}
    p^* = \min_{\rot \in \SO(d)^n} \tr(\nQ \rot\transpose \rot),
  \end{equation}
  where $\nQ \in \Sym(dn),\ \nQ \succeq 0$.
\end{problem}%
\noindent Specifically, the problems of rotation averaging (RA) and pose-graph
optimization (PGO) in Problems \ref{ra-mle} and \ref{se-mle}, respectively, can
be parameterized in terms of the following data matrices:
    \begin{subequations}
      \begin{equation}\label{eq:nq-ra}
        \nQ = \MeasRotConLap, \tag{RA}
      \end{equation}
      \begin{equation}\label{eq:nq-pgo}
        \nQ = \MeasRotConLap + \nQtran, \tag{PGO}
      \end{equation}
  \end{subequations}
  where $\MeasRotConLap$ is the \emph{rotation connection Laplacian} and
  $\nQtran$ is a data matrix comprised of translation measurements. For the
  purposes of this paper, the specific structure of $\nQ$ is not important; we
  require only that in the \emph{noiseless case}, where $\nQ = \tQ$, we have
  $\trot\transpose \in \ker(\tQ)$, where $\trot$ is the set of (latent)
  ground-truth rotational states, and  $\TrueRotConLap \succeq 0$
  and $\tQtran \succeq 0$ (see \cite[Appendix C.3]{rosen2019se} for a detailed
  analysis of the noiseless case). Finally, the interested reader may refer to
  Appendix \ref{app:data-matrix} for a complete description of these data
  matrices.

\section{Spectral methods for initialization}\label{sec:spectral-init}

\begin{small}
  \begin{algorithm}[t]
    \caption{Spectral initialization procedure\label{alg:spectral}}
    \begin{algorithmic}[1]
      \Input The data matrix $\nQ$ from \eqref{eq:nq-ra} or \eqref{eq:nq-pgo}
      \Output A spectral initialization $\Rinit$
      \Function{SpectralInitialization}{$\nQ$}
      \State Compute orthogonal set of eigenvectors $\eigvecs$ corresponding to the $d$
      smallest eigenvalues 
      \Statex[1] of $\nQ$.  \Comment{Solve Problem \ref{prob:spectral-relaxation}}.
      \For {$i = 1, \dotsc, n$}
      \State Set $\Rinit_i \leftarrow \SOrounded{\eigvecs_i}$, where $\eigvecs_i$ is the
      $i$-th $(d \times d)$ block of $\eigvecs$. \Comment{Definition \ref{def:so-proj}}
      \EndFor
      \State \Return $\Rinit$
      \EndFunction
    \end{algorithmic}
  \end{algorithm}
\end{small}

The nonconvexity of the $\SO(d)$ constraint renders Problem \ref{prob:rot-sync}
computationally hard to solve in general. However, we can generate a tractable
\emph{spectral relaxation} of Problem \ref{prob:rot-sync} by relaxing the
$\SO(d)$ constraint as follows:
\begin{problem}[Spectral Relaxation of Problem \ref{prob:rot-sync}]\label{prob:spectral-relaxation}
  \begin{equation}
  \label{eq:spectral-relaxation}
    \begin{aligned}
      p^*_{\mathrm{S}} &= \min_{Y \in \R^{d \times dn}} \tr(\nQ Y\transpose Y) \\
      &\mathrm{s.t.}\ YY\transpose = n I_d.
    \end{aligned}
  \end{equation}
\end{problem}%
\noindent Here, the $\SO(d)$ constraint on each $(d \times d)$ block of the
variable $Y$ has been replaced by the (weaker) constraint that $YY\transpose =
nI_d$, i.e. the matrix $Y$ is comprised of $d$ orthogonal rows of norm $\sqrt{n}$.  While the relaxed constraints in \eqref{eq:spectral-relaxation} are still quadratic and nonconvex, in Appendix \ref{app:spectral-relaxation-analysis} we prove that a feasible point $Y$ is
a (\emph{global}) minimizer of Problem \ref{prob:spectral-relaxation} if and only if its rows are comprised of
  $d$ pairwise orthogonal (and appropriately scaled) eigenvectors corresponding to the minimum $d$
eigenvalues of $\nQ$.
Therefore, one can recover an optimizer $Y^{\optsym}$ of Problem
\ref{prob:spectral-relaxation} via a simple eigenvector computation.\footnote{This justifies our referring to Problem \ref{prob:spectral-relaxation} as a ``spectral'' relaxation of Problem \ref{prob:rot-sync}.}

For the noiseless problem parameterized by $\tQ$, the relaxation in Problem
\ref{prob:spectral-relaxation} is exact in the sense that $\trot = GY^{\optsym}$
for some $G \in \Orthogonal(d)$.\footnote{The spectral relaxation in Problem
  \ref{prob:spectral-relaxation}, like Problem \ref{prob:rot-sync}, admits
  infinitely many solutions: if $Y^{\optsym}$ is a minimizer of Problem
  \ref{prob:spectral-relaxation}, then any $GY^{\optsym}, G \in \Orthogonal(d)$
  is also a minimizer.} This follows from the fact that, by construction, the
ground truth rotations $\trot\transpose$ lie in $\ker(\tQ)$,\footnote{We refer
  the reader to \cite[Appendix C.3]{rosen2019se} for detailed analysis of the
  noiseless case.} and $\trot \trot\transpose = nI_d$ since $\trot \in
\SO(d)^n$. Likewise, since $\trot$ is a minimizer of the relaxed problem
\emph{and} is in the feasible set for the Problem \ref{prob:rot-sync}, it is
also a minimizer for Problem \ref{prob:rot-sync}. In general, however, we do not expect such a nice correspondence to hold.  Indeed, 
a minimizer of Problem \ref{prob:spectral-relaxation} need not even be \emph{feasible} for
 Problem \ref{prob:rot-sync}, since the former is obtained from the latter by relaxing constraints. Therefore, we must in general \emph{round} the estimate provided by the spectral relaxation to obtain an approximate solution $\Rinit \in \SO(d)^n$ in the feasible set of Problem
\ref{prob:rot-sync}. The following definition makes this precise.
\begin{definition}[Projection onto $\SO(d)$]\label{def:so-proj}
  For $X \in \R^{d \times d}$, the projection $\SOrounded{X}$ of $X$ onto
  $\SO(d)$ is by definition a minimizer of the following:
  \begin{equation}
     \min_{G \in \SO(d)} \|X - G\|_F.
  \end{equation}
  A minimizer for this problem is given in closed-form as 
  \cite{hanson1981analysis, umeyama1991least}:
  \begin{equation}
    \SOrounded{X} = U \Xi V\transpose.
  \end{equation}
  where $X = U \Sigma V\transpose$ is a singular value decomposition, and
  $\Xi$ is the matrix:
  \begin{equation}
    \Xi = \Diag\left(1, 1, \det(UV\transpose)\right).
  \end{equation}
  In the context of subsequent derivations, it will be convenient to
  ``overload'' this rounding operation to $Y \in \R^{d \times dn}$ as follows:
  \begin{equation}
    \SOrounded{Y} = \left( \SOrounded{Y_1}, \ldots, \SOrounded{Y_n} \right),
  \end{equation}
  where $Y_i \in \R^{d\times d}$ are the $n$ blocks of $Y$.
\end{definition}
Therefore, we can obtain an approximate solution to Problem \ref{prob:rot-sync}
from a minimizer $Y^\optsym$ of the relaxation in Problem
\ref{prob:spectral-relaxation} as $\Rinit \triangleq \SOrounded{Y^\optsym}$.
Our overall spectral initialization procedure is summarized in Algorithm \ref{alg:spectral}.

\section{Main results}\label{sec:main-results}

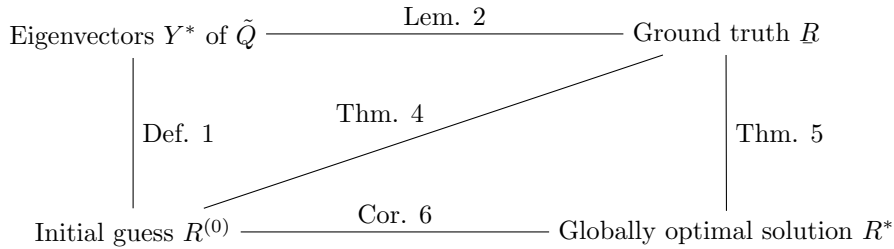
\begin{figure}[t]
  \centering
  \begin{tikzpicture} [align=center,xscale=4]
    \node (evecs) at (0,0) {Eigenvectors $\eigvecs$ of $\nQ$};
    \node (rinit) [below=2cm of evecs] {Initial guess $\Rinit$};
    \node (rtrue) [right=4.75cm of evecs] {Ground truth $\trot$};
    \node (ropt) [right=4.1cm of rinit] {Globally optimal solution $\Ropt$};
    \draw[-]
    (evecs) edge node[auto] {Lem. \ref{lem:alignment-eigvecs-trot}} (rtrue)
    (rinit) edge node[auto] {Thm. \ref{thm:spectral-error}} (rtrue)
    (rtrue) edge node[auto] {Thm. \ref{thm:opt-error}} (ropt)
    (rinit) edge node[auto] {Cor. \ref{cor:rinit-vs-ropt}} (ropt);
    \draw[-]
    (evecs) edge node[auto] {Def. \ref{def:so-proj}} (rinit);
  \end{tikzpicture}
  \caption{\textbf{Guide to the main results.} This figure presents a
    diagrammatic guide to the bounds presented in Section
    \ref{sec:main-results}. Results here are represented by the edges between
    the quantities they relate. In particular, Lemma
    \ref{lem:alignment-eigvecs-trot} gives a bound on the orbit distance between
    the eigenvectors of the data matrix and the ground truth. We then use this
    result in Theorem \ref{thm:spectral-error}, giving a bound on the deviation
    of the spectral initialization from the ground truth. In Theorem
    \ref{thm:opt-error} we bound the deviation of a \emph{globally optimal
      solution} to the maximum likelihood estimation problems for rotation
    averaging and pose-graph SLAM from the ground truth. Finally, relating these
    bounds we obtain Corollary \ref{cor:rinit-vs-ropt}, which bounds the
    distance between a spectral initialization and the globally optimal
    solution. \label{fig:guide}}
\end{figure}

This section presents our main results, which are three-fold: First, we provide
a bound on the error of our spectral initialization $\Rinit$ with respect to the
ground-truth rotations $\trot$. Second, we give a new bound on the error of
\emph{globally optimal} solutions $\Ropt$ with respect to $\trot$: this bound
differs from prior work (e.g. \citet{rosen2019se, preskitt2018phase}) in that it
is defined with respect to the orbit distance $\Sorbdist$ on $\SO(d)^n$.
Previous work used the orbit distance $\Oorbdist$ on $\Orthogonal(d)^n$ due to
mathematical convenience; however, the estimation error one considers in
application is actually over $\SO(d)^n$, since this is the domain on which the
estimation problem is defined. Combining these results, we obtain an upper bound
on the $\SO(d)$ orbit distance between an initial guess $\Rinit$ and a globally
optimal solution $\Ropt$. Our analysis gives direct control over the mutual
deviation between the three quantities of interest: $\Rinit$, $\Ropt$, and
$\trot$ as a function of the noise magnitude. We conclude with additional
remarks about computing these bounds for practical SLAM scenarios and a few
straightforward adaptations of the main results. Figure \ref{fig:guide} gives an
overview of the main results.

Recall from Problem \ref{prob:spectral-relaxation} that an estimate $\eigvecs$
is a minimizer of Problem \ref{prob:spectral-relaxation} if and only if it is
composed of a (suitably scaled) orthogonal set of eigenvectors corresponding to the minimum $d$ eigenvalues of $\nQ$, and that in the noiseless case a minimizer is given
by $\trot$. Since a spectral initialization $\Rinit$ is obtained as the
projection of a solution $\eigvecs$ of Problem \ref{prob:spectral-relaxation}
onto $\SO(d)^n$, we can bound its estimation error by first bounding the
deviation of $\eigvecs$ from $\trot$, then bounding the additional error
incurred by projecting onto $\SO(d)^n$.

We will begin our presentation of the main results by giving a bound on the
deviation of a solution $\eigvecs$ of Problem \ref{prob:spectral-relaxation}
from the ground truth $\trot$ via the Davis-Kahan Theorem \cite{yu2015useful}, a
classical result relating the perturbation of a matrix's eigenvectors under a
symmetric perturbation to the magnitude of that perturbation. Here, we take
$\tQ$ to be the matrix under consideration, and define the perturbation $\dQ
\triangleq \nQ - \tQ$. The following lemma, which we prove in Appendix
\ref{app:eigen-perturbation}, gives the desired characterization:
\begin{lem} \label{lem:alignment-eigvecs-trot} Let $\eigvecs$ be a minimizer of
  Problem \ref{prob:spectral-relaxation} and $\trot$ be the corresponding ground
  truth rotations. Then:
  \begin{equation}
    \Oorbdist\left(\trot, \eigvecs\right) \leq \frac{2\sqrt{2dn}\|\dQ\|_2}{\lambda_{d+1}(\tQ)}. \label{eq:eigvec-bound}
  \end{equation}
\end{lem}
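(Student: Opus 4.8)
The plan is to identify the rows of $\eigvecs$ and the rows of $\trot$ with matrices of orthonormal eigenvectors spanning the bottom-$d$ eigenspaces of $\nQ$ and $\tQ$ respectively, and then invoke the Davis-Kahan theorem to control how far these eigenspaces can rotate under the perturbation $\dQ$. Concretely, set $\hat{V} \triangleq \frac{1}{\sqrt{n}}\eigvecs^\transpose$ and $V \triangleq \frac{1}{\sqrt{n}}\trot^\transpose$, both in $\R^{dn \times d}$. The defining constraint $\eigvecs \in \eigvecSpace$ (resp.\ $\trot\trot^\transpose = nI_d$) makes both $V$ and $\hat{V}$ have orthonormal columns. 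By the characterization of minimizers of Problem \ref{prob:spectral-relaxation}, the columns of $\hat{V}$ are orthonormal eigenvectors of $\nQ$ associated with its $d$ smallest eigenvalues; and since $\trot^\transpose \in \ker(\tQ)$ with $\trot$ of full row rank, the columns of $V$ are orthonormal eigenvectors of $\tQ$ associated with eigenvalue $0$, which (under the standing connectivity assumption) constitute exactly the bottom-$d$ eigenspace of $\tQ$.

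The first step is a purely algebraic rewriting of the orbit distance. Since the Frobenius norm is invariant under transposition and $\|\trot - G\eigvecs\|_F = \sqrt{n}\,\|V - \hat{V} G^\transpose\|_F$ for every $G \in \Orthogonal(d)$, I obtain
\begin{equation}
  \Oorbdist(\trot, \eigvecs) = \sqrt{n}\, \min_{O \in \Orthogonal(d)} \|V - \hat{V} O\|_F .
\end{equation}
This reduces the claim to bounding the minimal orthogonal-Procrustes distance between two orthonormal frames, a quantity governed by the principal angles $\Theta$ between $\image(V)$ and $\image(\hat{V})$.

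Next I would relate this Procrustes distance to the canonical $\sin\Theta$ distance. Writing $\sigma_1, \ldots, \sigma_d \in [0,1]$ for the singular values of $V^\transpose \hat{V}$ (the cosines of the principal angles), one has $\min_{O} \|V - \hat{V} O\|_F^2 = 2\sum_i (1 - \sigma_i)$, whereas $\|\sin\Theta\|_F^2 = \sum_i (1 - \sigma_i^2)$. Since $1 - \sigma_i \leq 1 - \sigma_i^2$ for $\sigma_i \in [0,1]$, this yields the elementary inequality $\min_{O} \|V - \hat{V} O\|_F \leq \sqrt{2}\,\|\sin\Theta\|_F$. It then remains only to bound $\|\sin\Theta\|_F$ via the Davis-Kahan theorem in the form of \cite{yu2015useful}: selecting the bottom-$d$ eigenspaces, the relevant eigengap is $\lambda_{d+1}(\tQ) - \lambda_{d}(\tQ) = \lambda_{d+1}(\tQ)$ (using that the bottom $d$ eigenvalues of $\tQ$ vanish), and the operator-norm form gives $\|\sin\Theta\|_F \leq \frac{2\sqrt{d}\,\|\dQ\|_2}{\lambda_{d+1}(\tQ)}$. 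Chaining the three estimates produces exactly $\Oorbdist(\trot, \eigvecs) \leq \frac{2\sqrt{2dn}\,\|\dQ\|_2}{\lambda_{d+1}(\tQ)}$.

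The main obstacle is ensuring that the hypotheses of Davis-Kahan are met cleanly, in particular that the $d$ smallest eigenvalues of $\tQ$ are precisely the zero eigenvalues associated with the ground-truth kernel, so that the eigengap denominator is genuinely $\lambda_{d+1}(\tQ)$. This hinges on $\ker(\tQ)$ being exactly $d$-dimensional, which follows from the noiseless-case analysis: the columns of $\trot^\transpose$ are linearly independent because $\trot\trot^\transpose = nI_d$, and connectivity of the measurement graph forces the kernel to be no larger. A secondary technical point is to correctly reconcile the eigenvalue indexing convention of the Davis-Kahan statement with the increasing convention used here, taking care that the gap is evaluated on the $\tQ$ side and that the operator-norm (rather than Frobenius) variant is invoked to obtain the $\sqrt{d}\,\|\dQ\|_2$ numerator.
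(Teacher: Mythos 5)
Your proof is correct and follows essentially the same route as the paper: identify $n^{-1/2}\trot\transpose$ and $n^{-1/2}{\eigvecs}\transpose$ as orthonormal bases for the bottom-$d$ eigenspaces of $\tQ$ and $\nQ$ (via Theorem \ref{thm:spectral-relaxation-minimizer} and connectivity), apply the Davis--Kahan theorem of \citet{yu2015useful} with eigengap $\lambda_{d+1}(\tQ) - \lambda_d(\tQ) = \lambda_{d+1}(\tQ)$, and rescale by $\sqrt{n}$. The only difference is presentational: you invoke the $\sin\Theta$ form of the bound and derive the orthogonal-alignment step $\min_{O \in \Orthogonal(d)}\|V - \hat{V}O\|_F \leq \sqrt{2}\,\|\sin\Theta\|_F$ by hand from the singular values of $V\transpose\hat{V}$, whereas the paper cites the variant (Theorem \ref{thm:useful-dk-bound}) in which that $\sqrt{2}$ Procrustes factor is already built in.
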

Lemma \ref{lem:alignment-eigvecs-trot} provides control over the deviation of an
``unrounded'' solution $\eigvecs$ from the ground truth $\trot$. The second technical
ingredient we require is the following simple bound controlling the maximum
distance between a matrix $X$ and its projection $\SOrounded{X}$ onto $\SO(d)$:
\begin{lem}\label{lem:so-to-o-rounding}
  Let $X \in \R^{d \times d}$ and $R \in \SO(d)$. Then:
  \begin{equation}
    \|\SOrounded{X} - R \|_F \leq 2\|X - R\|_F.
  \end{equation}
\end{lem}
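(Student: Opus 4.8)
The plan is to exploit the variational characterization of $\SOrounded{X}$ together with a single application of the triangle inequality. Recall from Definition \ref{def:so-proj} that $\SOrounded{X}$ is by construction a minimizer of $\min_{G \in \SO(d)} \|X - G\|_F$, i.e.\ it is a nearest point of $\SO(d)$ to $X$ in the Frobenius norm. The first step is therefore to note that, since $R \in \SO(d)$ is itself a feasible point for this minimization problem, the optimal value cannot exceed the objective value attained at $R$. This yields the key inequality
\begin{equation*}
  \|X - \SOrounded{X}\|_F \;\leq\; \|X - R\|_F.
\end{equation*}

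The second step is to split the quantity of interest via the triangle inequality, inserting $X$ as an intermediate point:
\begin{equation*}
  \|\SOrounded{X} - R\|_F \;\leq\; \|\SOrounded{X} - X\|_F + \|X - R\|_F.
\end{equation*}
Substituting the bound from the first step into the first term on the right-hand side gives $\|\SOrounded{X} - R\|_F \leq \|X - R\|_F + \|X - R\|_F = 2\|X - R\|_F$, which is the claimed inequality.

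There is no substantive obstacle here: the entire argument rests on the projection (optimality) property of $\SOrounded{X}$ and the triangle inequality, and does not require the explicit SVD formula from Definition \ref{def:so-proj}. The only point worth stating carefully is that $R \in \SO(d)$ must indeed be admissible in the defining minimization, which is immediate from the hypothesis $R \in \SO(d)$. The factor of $2$ arises precisely because we bound the ``projection error'' $\|X - \SOrounded{X}\|_F$ by the reference distance $\|X - R\|_F$ rather than controlling it directly.
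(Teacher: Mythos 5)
Your proof is correct and is essentially identical to the paper's: both insert $X$ via the triangle inequality and then bound the projection error $\|\SOrounded{X} - X\|_F$ by $\|X - R\|_F$ using the optimality of $\SOrounded{X}$ over $\SO(d)$ together with the feasibility of $R$.
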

\begin{proof}
  \begin{align}
    \|\SOrounded{X} - R \|_F &= \|\SOrounded{X} - X + X - R\|_F \\
                             &\leq \|\SOrounded{X} - X\|_F + \|X - R\|_F \\
                             &\leq 2 \|X - R\|_F,
  \end{align}
  where the last inequality follows from the fact that $\SOrounded{X}$ is a
  minimizer over $\SO(d)$ of the distance to $X$ with respect to the Frobenius
  norm, and that, by hypothesis, $R \in \SO(d)$.
\end{proof}
Lemma \ref{lem:so-to-o-rounding} provides a straightforward approach for
converting a bound expressed in the $\Orthogonal(d)^n$ orbit distance to one
expressed in the $\SO(d)^n$ orbit distance. In turn, we obtain the following
theorem, which we prove in Appendix \ref{app:proof-spectral-error}:
\begin{thm}\label{thm:spectral-error}
  Let $\eigvecs$ be a minimizer of Problem \ref{prob:spectral-relaxation} and
  $\Rinit = \SOrounded{\eigvecs}\in \SO(d)^n$ be the corresponding spectral
  initialization. Finally, let $\trot \in \SO(d)^n$ be the set of ground truth
  rotations in Problem \ref{prob:rot-sync}. Then the estimation error of
  $\Rinit$ satisfies:
  \begin{equation}
    \Sorbdist(\trot, \Rinit) \leq \frac{4\sqrt{2dn}\|\dQ\|_2}{\lambda_{d+1}(\tQ)}. \label{eq:spectral-error}
  \end{equation}
\end{thm}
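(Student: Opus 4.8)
The plan is to \emph{chain} the two preceding lemmas. Lemma \ref{lem:alignment-eigvecs-trot} controls the $\Orthogonal(d)$ orbit distance between the unrounded relaxation solution $\eigvecs$ and the ground truth $\trot$, while Lemma \ref{lem:so-to-o-rounding} controls the additional error introduced by the block-wise projection onto $\SO(d)$. Composing these should produce exactly the factor $4 = 2 \cdot 2$ appearing in \eqref{eq:spectral-error}, where one factor of $2$ is inherited from the bound \eqref{eq:eigvec-bound} and the second comes from the rounding step. The target intermediate inequality is $\Sorbdist(\trot, \Rinit) \leq 2\,\Oorbdist(\trot, \eigvecs)$, after which substituting Lemma \ref{lem:alignment-eigvecs-trot} immediately yields the claim.

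Concretely, I would first fix an orthogonal matrix $G^* \in \Orthogonal(d)$ attaining the minimum defining $\Oorbdist(\trot, \eigvecs)$, so that $\|\trot - G^*\eigvecs\|_F = \Oorbdist(\trot, \eigvecs)$. Assuming $G^* \in \SO(d)$ (see the obstacle below), I can upper-bound $\Sorbdist(\trot, \Rinit)$ simply by evaluating the $\SO(d)$ orbit objective at this particular $G^*$, since it is then a feasible point. Expanding the squared Frobenius norm blockwise and using the orthogonal invariance of $\|\cdot\|_F$, the $i$-th block contributes $\|G^*\Rinit_i - \trot_i\|_F = \|\SOrounded{\eigvecs_i} - R_i\|_F$, where $R_i \triangleq (G^*)\transpose \trot_i \in \SO(d)$. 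Applying Lemma \ref{lem:so-to-o-rounding} to each block with $X = \eigvecs_i$ and this $R_i$ gives $\|\SOrounded{\eigvecs_i} - R_i\|_F \leq 2\|\eigvecs_i - R_i\|_F = 2\|G^*\eigvecs_i - \trot_i\|_F$. Squaring, summing over $i = 1, \ldots, n$, and taking square roots (the common factor of $2$ passes through the $\ell_2$ aggregation) yields $\Sorbdist(\trot, \Rinit) \leq \|\trot - G^*\Rinit\|_F \leq 2\|\trot - G^*\eigvecs\|_F = 2\,\Oorbdist(\trot, \eigvecs)$, which is the desired intermediate bound.

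The main obstacle is the \emph{chirality} interaction between the global alignment and the local rounding: the projection $\SOrounded{\cdot}$ commutes with left multiplication by $\SO(d)$ but not by a reflection, so the block-wise step above requires $G^* \in \SO(d)$, whereas Lemma \ref{lem:alignment-eigvecs-trot} only furnishes $G^* \in \Orthogonal(d)$. When $\det G^* = -1$ the matrices $R_i = (G^*)\transpose\trot_i$ have determinant $-1$ and fall outside the hypothesis of Lemma \ref{lem:so-to-o-rounding}, so the global reflection cannot simply be pushed through the projection. The crux of the proof is therefore to confirm that the alignment relevant to the rounded estimate can be taken orientation-preserving — equivalently, that the block-wise projection $\SOrounded{\cdot}$ interacts with the sign of $\det(\trot\,\eigvecs\transpose)$ in a way compatible with the $\SO(d)$ orbit distance — so that the clean block-wise argument applies and the factor-of-four bound \eqref{eq:spectral-error} is recovered.
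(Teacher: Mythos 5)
Your completed argument is, modulo presentation, exactly the paper's proof of Theorem \ref{thm:spectral-error}: the paper likewise chains Lemma \ref{lem:alignment-eigvecs-trot} with a blockwise application of Lemma \ref{lem:so-to-o-rounding} to get the intermediate inequality $\Sorbdist(\trot, \Rinit) \leq 2\,\Oorbdist(\trot, \eigvecs)$, with the same accounting of the factor $4 = 2 \cdot 2$. The only difference is cosmetic: where you evaluate the $\SO(d)$ orbit objective at the optimal $\Orthogonal(d)$ alignment $G^*$, the paper asserts ``without loss of generality'' that $\trot$ and $\eigvecs$ are orbit representatives satisfying $\Oorbdist(\trot, \eigvecs) = \|\trot - \eigvecs\|_F$ and then evaluates at $G = I$.

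The chirality obstacle you flag in your last paragraph is genuine, and you should know that the paper does \emph{not} resolve it---its WLOG is precisely the step you declined to take. Re-gauging $\eigvecs \mapsto G^{*}\eigvecs$ preserves the set of minimizers of Problem \ref{prob:spectral-relaxation}, but it silently replaces the estimate $\SOrounded{\eigvecs}$ by $\SOrounded{G^{*}\eigvecs}$, and these have the same $\SO(d)$ orbit distance to $\trot$ only when $\det G^{*} = +1$, since $\SOrounded{\cdot}$ is equivariant under left multiplication by $\SO(d)$ but not by reflections. Moreover, the reflection case cannot be argued away, because eigensolvers return eigenvectors only up to sign: in the noiseless limit $\dQ = 0$, the matrix $\eigvecs = D\trot$ with $D = \Diag(1, \dotsc, 1, -1)$ is a perfectly legitimate minimizer. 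Each block $D\trot_i$ is then a reflection, whose nearest point in $\SO(d)$ is non-unique (for $d = 2$, \emph{every} rotation is equidistant from a given reflection), so $\Rinit$ is essentially arbitrary and $\Sorbdist(\trot, \Rinit)$ is generically of order $\sqrt{n}$, while the right-hand side of \eqref{eq:spectral-error} vanishes; small nonzero noise yields the same failure with an ill-conditioned rather than ill-posed projection. In other words, the theorem as stated is false for reflection-gauge minimizers, so no inequality can close the gap in your argument (or the paper's) without modifying the procedure or the statement.

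The standard repair is not a sharper bound but a determinant correction in the rounding step: project the blocks of $\eigvecs$ onto $\Orthogonal(d)$, use the (majority) sign of their determinants to detect the chirality of the solution---this is a data-driven proxy for the sign of $\det\left(\trot\,\eigvecs\transpose\right)$ you identify---and re-round $D\eigvecs$ in place of $\eigvecs$ when that sign is negative (cf.\ the rounding procedure used in SE-Sync \cite{rosen2019se}). With this correction the optimal alignment can indeed be taken in $\SO(d)$, and your orientation-preserving argument closes verbatim. This subtlety is exactly the price of working with $\Sorbdist$ over $\SO(d)^n$ rather than $\Oorbdist$ over $\Orthogonal(d)^n$, which is the very distinction the paper advertises as its improvement over prior work; credit to you for surfacing it rather than burying it in a ``without loss of generality.''
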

The bound \eqref{eq:spectral-error} gives a direct (linear) relationship between
the magnitude of the perturbation $\dQ$ and the worst-case error of a spectral
estimate. Moreover, Theorem \ref{thm:spectral-error} implies that
$\Sorbdist(\trot, \Rinit) \rightarrow 0$ as $\dQ \rightarrow 0$. That is to say,
as the measurements approach their noiseless counterparts, our spectral estimate
approaches the ground truth.

Next, we address the issue of furnishing a bound on $\Sorbdist(\trot, \Ropt)$.
The following theorem, which we prove in Appendix \ref{app:proof-mle-bound},
gives the desired result:
\begin{thm}[Bounding the estimation error for $\Ropt$]\label{thm:opt-error}
  Let $\Ropt$ be a minimizer of Problem \ref{prob:rot-sync} and $\trot$ be the
  set of ground-truth rotations. Then the estimation error of $\Ropt$ satisfies:
  \begin{equation}
    \Sorbdist(\trot, \Ropt) \leq \frac{8\sqrt{dn}\|\dQ\|_2}{\lambda_{d+1}(\tQ)}. \label{eq:opt-error}
  \end{equation}
\end{thm}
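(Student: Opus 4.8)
The plan is to pair a \emph{suboptimality} inequality for $\Ropt$ with a \emph{quadratic-growth} estimate for the noiseless objective around the ground-truth orbit, and then solve the resulting scalar inequality for $\Sorbdist(\trot,\Ropt)$; the growth modulus will be the spectral gap $\lambda_{d+1}(\tQ)$, which is exactly what produces the denominator in \eqref{eq:opt-error}. Throughout I use only the noiseless structure guaranteed in Problem \ref{prob:rot-sync}, namely $\tQ\succeq 0$, $\tQ\trot\transpose=0$, and $\trot\trot\transpose=nI_d$; in particular this argument is independent of the spectral results preceding it.

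First I record the optimality gap. Optimality of $\Ropt$ and feasibility of $\trot$ give $\tr(\nQ\Ropt{}\transpose\Ropt)\le\tr(\nQ\trot\transpose\trot)$; substituting $\nQ=\tQ+\dQ$ and using $\tr(\tQ\trot\transpose\trot)=0$ rearranges to
\[
  \tr(\tQ\Ropt{}\transpose\Ropt)\;\le\;\tr\!\big(\dQ(\trot\transpose\trot-\Ropt{}\transpose\Ropt)\big).
\]
To bound the right-hand side, let $G\in\SO(d)$ attain the orbit distance and write $E\triangleq\Ropt-G\trot$, so that $\|E\|_F=\Sorbdist(\trot,\Ropt)$. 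Expanding gives $\trot\transpose\trot-\Ropt{}\transpose\Ropt=-(E\transpose G\trot+\trot\transpose G\transpose E+E\transpose E)$; since $\dQ$ is symmetric, $\|G\trot\|_F=\sqrt{dn}$, and $\|XY\|_F\le\|X\|_F\|Y\|_2$, the two cross terms are each at most $\sqrt{dn}\,\|\dQ\|_2\|E\|_F$ and the quadratic term at most $\|\dQ\|_2\|E\|_F^2$, so the right-hand side is $\le 2\sqrt{dn}\,\|\dQ\|_2\|E\|_F+\|\dQ\|_2\|E\|_F^2$.

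Next I lower-bound the left-hand side (quadratic growth). Let $P_\perp$ be the orthogonal projector onto $\ker(\tQ)^{\perp}$. Because the $d$ smallest eigenvalues of $\tQ$ vanish, $\tQ\succeq\lambda_{d+1}(\tQ)P_\perp$, so $\tr(\tQ\Ropt{}\transpose\Ropt)=\tr(\Ropt\tQ\Ropt{}\transpose)\ge\lambda_{d+1}(\tQ)\|\Ropt P_\perp\|_F^2$. Since $\trot$'s rows span $\ker(\tQ)$ we have $\trot P_\perp=0$, hence $\Ropt P_\perp=EP_\perp$ and the bound reads $\lambda_{d+1}(\tQ)\|EP_\perp\|_F^2$. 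The crux is to convert this \emph{transverse} component back into the \emph{full} displacement $\|E\|_F$: the difference is the kernel (gauge) component $\|EP_0\|_F$, $P_0\triangleq I-P_\perp$, which lies along the flat symmetry-orbit direction. I expect $\|EP_0\|_F$ to be \emph{second order} in $\|\Ropt P_\perp\|_F$: writing $A\triangleq\tfrac1n\Ropt\trot\transpose$, one computes $AA\transpose=I_d-\tfrac1n(\Ropt P_\perp)(\Ropt P_\perp)\transpose\preceq I_d$, so $A$ is nearly orthogonal with $\mathrm{dist}(A,\SO(d))=O(\|\Ropt P_\perp\|_F^2)$; moreover the minimizing $G$ is exactly $\SOrounded{A}$ (Definition \ref{def:so-proj}), giving $\|EP_0\|_F=\sqrt n\,\|A-\SOrounded A\|_F$, which in the relevant small-noise regime yields $\|EP_\perp\|_F^2\ge\tfrac12\|E\|_F^2$. \textbf{This is the main obstacle}, and it is exactly where the $\SO(d)$ analysis is harder than the $\Orthogonal(d)$ analysis used in prior work: I must verify that the optimal alignment is a \emph{proper} rotation (equivalently $\det A>0$, so that $\SOrounded A=\Orounded A$ and no singular value is reflected), since otherwise the transverse-to-total comparison fails.

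Finally I combine the two estimates: $\tfrac12\lambda_{d+1}(\tQ)\|E\|_F^2\le 2\sqrt{dn}\,\|\dQ\|_2\|E\|_F+\|\dQ\|_2\|E\|_F^2$. Dividing by $\|E\|_F$ and imposing a mild noise condition such as $\|\dQ\|_2\le\tfrac14\lambda_{d+1}(\tQ)$ (under which the orientation hypothesis above also holds, and outside of which the claimed bound is essentially vacuous) gives $\Sorbdist(\trot,\Ropt)=\|E\|_F\le 8\sqrt{dn}\,\|\dQ\|_2/\lambda_{d+1}(\tQ)$, as claimed. Everything except the quadratic-growth/gauge step of the previous paragraph is routine norm bookkeeping; the genuine difficulty is the gauge-invariant, orientation-aware geometry needed to establish quadratic growth directly in the $\SO(d)$ orbit metric.
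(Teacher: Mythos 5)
Your overall skeleton — an optimality-gap inequality plus a spectral lower bound on $\tr(\tQ{\Ropt}\transpose\Ropt)$ via projection onto $\ker(\tQ)^\perp$ — is exactly the paper's strategy (your $\|\Ropt P_\perp\|_F$ is the paper's $\|P\|_F$). But there is a genuine gap at precisely the step you flag as the main obstacle, and it is not repaired by your noise condition. You assert that under $\|\dQ\|_2 \le \tfrac14\lambda_{d+1}(\tQ)$ the optimal alignment is a proper rotation, so that $\|EP_\perp\|_F^2 \ge \tfrac12\|E\|_F^2$; nothing in your sketch establishes this, and the route you indicate cannot. The only a priori control optimality provides is $\|\Ropt P_\perp\|_F^2 \le 8dn\|\dQ\|_2/\lambda_{d+1}(\tQ) \le 2dn$ under your condition, which does not even force $A = \tfrac1n\Ropt\trot\transpose$ to be nonsingular (that needs $\tfrac1n\|\Ropt P_\perp\|_F^2 < 1$), let alone $\det A > 0$ — and singular values near $1$ never determine the \emph{sign} of $\det A$ by themselves. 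Worse, the natural attempts to exclude the reflection case are circular: when $\det A < 0$, writing $\sigma_1,\dotsc,\sigma_d$ for the singular values of $A$, one computes $\|E\|_F^2 = 2n\bigl(\sum_{i<d}(1-\sigma_i) + 1 + \sigma_d\bigr) \ge 2n$, so ruling out reflection amounts to showing $\Sorbdist(\trot,\Ropt) < \sqrt{2n}$ — essentially the conclusion you are trying to prove. And in that reflection case your growth inequality fails completely: $\|EP_\perp\|_F^2/\|E\|_F^2 \le \sum_i(1-\sigma_i^2)/2 \to 0$ as the transverse component shrinks, so the claimed factor $\tfrac12$ cannot be salvaged by any constant.

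The paper avoids this issue entirely, which is why its theorem needs no noise assumption. Its Lemma \ref{lem:sorbdist-vs-p} does not argue that the gauge component is second order; instead it applies the rounding bound of Lemma \ref{lem:so-to-o-rounding} blockwise to the very matrix you call $A$: since $\Ropt_i\trot_i\transpose \in \SO(d)$, one gets $\Sorbdist(\trot,\Ropt) \le \|\Ropt - \SOrounded{A}\trot\|_F \le 2\|\Ropt - A\trot\|_F = 2\|P\|_F$, \emph{unconditionally} — the potential reflection is absorbed into the factor $2$ rather than forbidden. The paper also bounds the noise term more cleanly: writing $\tr(\dQ\trot\transpose\trot) - \tr(\dQ{\Ropt}\transpose\Ropt) = \vect(\trot-\Ropt)\transpose(\dQ\otimes I_n)\vect(\trot+\Ropt) \le 2\sqrt{dn}\,\|\dQ\|_2\|E\|_F$, which is purely linear in $\|E\|_F$, so no quadratic term needs to be absorbed and no case split on the noise magnitude is required (your large-noise ``vacuous'' regime is indeed trivially fine, since $\Sorbdist \le 2\sqrt{dn}$ always, but the paper never needs it). Combining $\lambda_{d+1}(\tQ)\|P\|_F^2 \le 2\sqrt{dn}\,\|\dQ\|_2\|E\|_F$ with $\|E\|_F^2 \le 4\|P\|_F^2$ gives the constant $8$ directly. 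To repair your proof you would need either a non-circular argument for the orientation claim (which appears to require substantially stronger hypotheses) or to replace your quadratic-growth step with a rounding argument of this type.
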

To the best of our knowledge, Theorem \ref{thm:opt-error} is the first result to
appear in the literature that directly controls the estimation error of the
maximum likelihood estimate $\Ropt$ over $\SO(d)^n$ specifically. Prior work
considered the estimation error over $\Orthogonal(d)^n$ \cite{ling2020near,
  bandeira2017tightness, rosen2019se}. In our application, however, we are
specifically concerned with the estimation error over $\SO(d)^n$; as one can see
from inspection, this is the domain on which Problem \ref{prob:rot-sync} is
defined. Thus, the $\SO(d)^n$ orbit distance corresponds to the actual error one
would obtain in practice.

While Theorem \ref{thm:spectral-error} establishes error bounds for the spectral
estimator, when viewed as an \emph{initialization method}, the distance between
the initial guess $\Rinit$ and the globally optimal solution is the primary
concern. A corollary to Theorems \ref{thm:spectral-error} and
\ref{thm:opt-error}, allows us to control $\Sorbdist(\Rinit, \Ropt)$ in terms of
the noise matrix $\dQ$. We have:
\begin{cor}\label{cor:rinit-vs-ropt}
  The orbit distance between the initialization $\Rinit$ and a globally optimal
  solution $\Ropt$ satisfies:
  \begin{equation}
    \Sorbdist(\Rinit, \Ropt) \leq \frac{(8 + 4 \sqrt{2})\sqrt{dn}\|\dQ\|_2}{\lambda_{d+1}(\tQ)}. \label{eq:spectral-opt-dev}
  \end{equation}
\end{cor}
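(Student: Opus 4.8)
The plan is to obtain the desired bound by a triangle-inequality argument that combines the two estimation-error bounds already established in Theorems \ref{thm:spectral-error} and \ref{thm:opt-error}. Both of those results control the deviation of $\Rinit$ and of $\Ropt$ from the common reference point $\trot$ in the $\SO(d)^n$ orbit distance, so provided $\Sorbdist$ behaves like a genuine metric, we should be able to write $\Sorbdist(\Rinit, \Ropt) \le \Sorbdist(\Rinit, \trot) + \Sorbdist(\trot, \Ropt)$ and simply add the two right-hand sides.

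The one point requiring care --- and the only real step in the argument --- is verifying that $\Sorbdist$ satisfies the triangle inequality, since it is defined as a minimum over the group $\SO(d)$ rather than as an honest norm. I would establish this directly from the definition: given $X, Y, Z \in \SO(d)^n$, let $G_1, G_2 \in \SO(d)$ attain the minima defining $\Sorbdist(X, Y)$ and $\Sorbdist(Y, Z)$, respectively. Using $G_1 G_2$ as a (generally suboptimal) alignment for the pair $(X, Z)$ and inserting $\pm G_1 Y$, the ordinary triangle inequality for $\|\cdot\|_F$ gives
\[
  \Sorbdist(X, Z) \le \|X - G_1 Y\|_F + \|G_1(Y - G_2 Z)\|_F.
\]
The crucial observation is that left multiplication by the orthogonal matrix $G_1$ preserves the Frobenius norm, so $\|G_1(Y - G_2 Z)\|_F = \|Y - G_2 Z\|_F$, and the right-hand side collapses to $\Sorbdist(X, Y) + \Sorbdist(Y, Z)$. (The same invariance shows $\Sorbdist$ is symmetric, so the roles of its arguments may be exchanged freely.)

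With the triangle inequality in hand, the conclusion is immediate. I would apply it with $X = \Rinit$, $Y = \trot$, and $Z = \Ropt$, substitute the bound $\Sorbdist(\trot, \Rinit) \le 4\sqrt{2dn}\,\|\dQ\|_2 / \lambda_{d+1}(\tQ)$ from Theorem \ref{thm:spectral-error} together with $\Sorbdist(\trot, \Ropt) \le 8\sqrt{dn}\,\|\dQ\|_2 / \lambda_{d+1}(\tQ)$ from Theorem \ref{thm:opt-error}, and then rewrite $4\sqrt{2dn} = 4\sqrt{2}\,\sqrt{dn}$ so that the two terms share the common factor $\sqrt{dn}\,\|\dQ\|_2/\lambda_{d+1}(\tQ)$. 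Adding the numerators produces the constant $8 + 4\sqrt{2}$ and hence exactly \eqref{eq:spectral-opt-dev}. I do not anticipate any genuine obstacle here: the entire content of the corollary is the metric property of the orbit distance, and once that is recorded the bound is just arithmetic.
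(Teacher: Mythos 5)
Your proposal is correct and follows essentially the same route as the paper: the paper also first proves (in Lemma \ref{lem:pseudometrics}) that $\Sorbdist$ is a pseudometric, using precisely your construction of composing the two optimal alignments $G^*_{XY}G^*_{YZ}$ and invoking orthogonal invariance of the Frobenius norm, and then deduces the corollary by the triangle inequality $\Sorbdist(\Rinit, \Ropt) \leq \Sorbdist(\trot, \Rinit) + \Sorbdist(\trot, \Ropt)$ together with Theorems \ref{thm:spectral-error} and \ref{thm:opt-error}. The arithmetic $4\sqrt{2dn} = 4\sqrt{2}\sqrt{dn}$ yielding the constant $8 + 4\sqrt{2}$ is likewise identical.
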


These bounds provide a clear relationship between the spectral properties of
$\tQ$ and $\dQ$ and the deviation between a spectral estimator $\Rinit$,
maximum likelihood estimator $\Ropt$, and the ground-truth $\trot$. An important
consequence of these bounds is that as $\dQ \rightarrow 0$, we have (at least)
linear convergence of the estimation error for \emph{both} the spectral
estimator and the maximum likelihood estimator to zero. This, in turn,
guarantees that $\dQ \rightarrow 0$ implies $\Ropt, \Rinit \rightarrow \trot$
(up to symmetry), which is what we would expect.

In practice, however, we do not have access to $\tQ$. This presents some
difficulty in the computation of $\dQ$ and $\lambda_{d+1}(\tQ)$. Fortunately,
the noiseless rotation matrices admit a description in terms of quantities that
\emph{are} typically assumed to be known. In particular, we have \cite[Lemma
8]{rosen2019se}:
\begin{equation}
  \lambda_{d+1}(\TrueRotConLap) = \lambda_2(\LapRotW),
\end{equation}
where $\LapRotW$ is the Laplacian of the rotational weight graph. Now,
$\LapRotW$ depends only on the concentration parameters $\kappa_{ij}$ attached
to each edge, which are generally assumed to be known \emph{a priori} from the
noise models \eqref{eq:gen-model-ra} and \eqref{eq:gen-model-pgo}. In the
rotation averaging case, we have $\tQ = \TrueRotConLap$, and therefore the
denominator $\lambda_{d+1}(\tQ)$ is readily available as
$\lambda_2(\LapRotW)$, the algebraic connectivity of the rotational weight
Laplacian.

In the case of pose-graph SLAM, where the matrix $\tQ$ contains the
translational terms $\tQtran$, we can use the fact that $\tQ = \TrueRotConLap +
\tQtran$ is the sum of positive-semidefinite matrices (see \citet[Appendix
C.3]{rosen2019se}), so $\lambda_{d+1}(\TrueRotConLap) \leq
\lambda_{d+1}(\TrueRotConLap + \tQtran) = \lambda_{d+1}(\tQ)$. In particular,
the (weaker) bounds obtained by substituting $\lambda_{d+1}(\tQ)$ with
$\lambda_{d+1}(\TrueRotConLap)$ in \eqref{eq:spectral-error} and
\eqref{eq:opt-error} hold.

Moreover, a common SLAM initialization technique is that of \emph{rotation only
  initialization} -- i.e., to compute the initializer $\Rinit$ using \emph{only}
the relative rotation measurements \cite{carlone2015initialization}. This can
have computational advantages in practice since $\MeasRotConLap$ is generally
\emph{sparse}; the same cannot be said for the pose-graph SLAM data matrix
$\nQ$, as it arises via analytic elimination of the translational states, in
which case the resulting data matrix $\nQ$ is formed as a (dense) generalized
Schur complement \cite[Appendix B]{rosen2019se}. Interestingly, for pose-graph
SLAM, a spectral initialization $\Rinit$ computed using the eigenvectors of
$\MeasRotConLap$ (i.e. ignoring $\nQtran$) attains the bound:
\begin{equation}
  \Sorbdist(\trot, \Rinit) \leq \frac{4\sqrt{2dn}\|\RotConLapDev\|_2}{\lambda_{d+1}(\TrueRotConLap)}. \label{eq:ronly-bound}
\end{equation}
This bound holds by the same reasoning as Theorem \ref{thm:spectral-error}, but
with the consideration that $\trot\transpose \in \ker(\TrueRotConLap)$.

As a final consideration, typically we do not have access to $\dQ$ (if we did,
we could recover the true data matrix $\tQ$ as $\nQ - \dQ$). In consequence, we
need a method to estimate the likely magnitude of the noise in a given
application. One way of achieving this is via simulation from the generative
model, given a measurement network and associated measurement
precisions.\footnote{Simulating measurements in the case of pose-graph SLAM
  requires knowledge of the ground-truth translation measurement scale, which is
  typically also unavailable in practice. However, the \emph{rotation-only}
  initialization bound \eqref{eq:ronly-bound} applies in general and depends
  only upon the rotation measurements, which can be simulated to produce an
  empirical distribution over the spectral norm of the perturbation matrix.}
This, in turn, gives a sample set from a \emph{distribution} over the bounds
\eqref{eq:spectral-error}, \eqref{eq:opt-error}, and
\eqref{eq:spectral-opt-dev}.

\begin{figure*}[t]
  \centering
  \includegraphics[width=1.0\columnwidth]{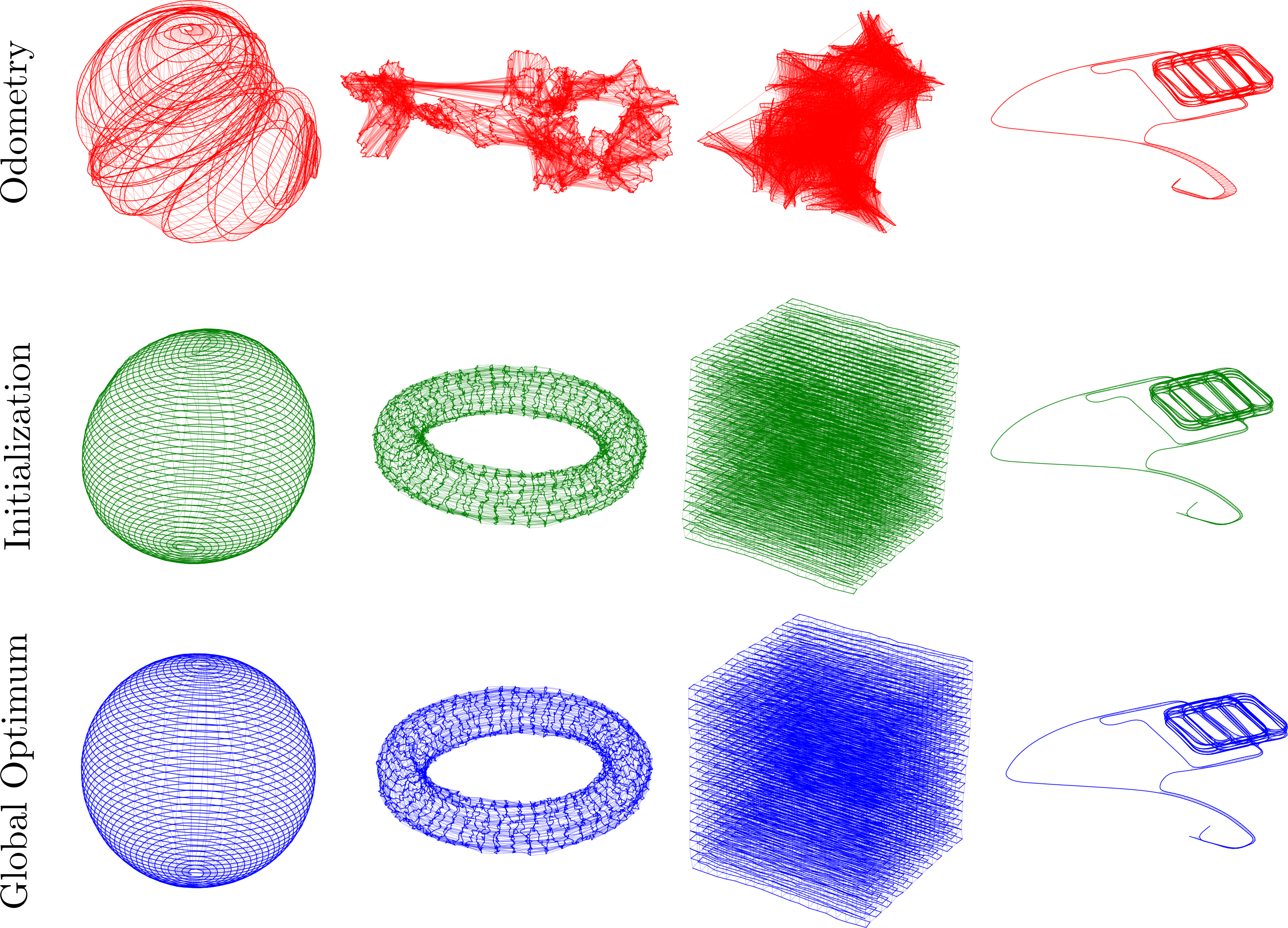}
  \caption{\textbf{Spectral relaxation produces high-quality initializations.}
    Qualitative comparison with the globally optimal solution suggests that the
    spectral relaxation produces estimates that are very close to 
    optimal for a variety of SLAM benchmark
    datasets. The corresponding \emph{quantitative} comparison is given in
    Table \ref{table:stats}. \label{fig:spectral-global}}
\end{figure*}

\section{Experimental results}\label{sec:experimental-results}

In this section, we compare the bounds in Theorem \ref{thm:spectral-error} to
the actual estimation error incurred by the spectral initialization and globally
optimal pose-graph SLAM solutions on a variety of simulated problem instances,
as well as benchmark SLAM problems. In Section \ref{sec:synthetic-data}
we construct synthetic pose-graph SLAM scenarios for which the ground-truth
poses are known. Since the bounds we have presented depend upon knowledge of the
noise magnitude $\|\dQ\|_2$ and the spectral gap of the \emph{true} data matrix
$\tQ$, which are unknown in practice for pose-graph SLAM, our first set of
empirical results shed light on the behavior of these worst-case bounds (as well
as the \emph{actual} error realized by different estimators) as we vary the
noise parameters controlling the generative model \eqref{eq:gen-model-pgo}. In
Section \ref{sec:benchmark-data}, we evaluate the performance of spectral
relaxation as a practical initialization method in the context of 3D pose-graph
SLAM applications. We show that, consistent with our results on synthetic data,
the spectral initialization method offers high-quality initial solutions for
pose-graph optimization, and in particular, that the inclusion of translational
measurements significantly improves the quality of the spectral estimator versus the common approach of using exclusively rotational
measurements.

The spectral initialization method was implemented in C++ using Spectra to
efficiently solve large-scale eigenvalue problems \cite{spectralib}. Computation
of the bounds in Section \ref{sec:synthetic-data} was performed in MATLAB using
\texttt{eigs}. All experiments were performed on a laptop with a 2.2 GHz Intel
i7 CPU. Where (verified) globally optimal solutions were needed, we used the C++
implementation of SE-Sync \cite{rosen2019se}. We also provide results using the
well-known \emph{chordal} initialization method \cite{martinec2007robust}, which
relaxes the feasible set of Problem \ref{prob:rot-sync} to $\R^{d \times dn}$,
with the constraint that $\Rinit_1 = I_d$, for which the solution can be
obtained by solving a linear system.

\subsection{Evaluation on synthetic data}\label{sec:synthetic-data}

\begin{figure}
  \centering
  \includegraphics[width=0.5\columnwidth]{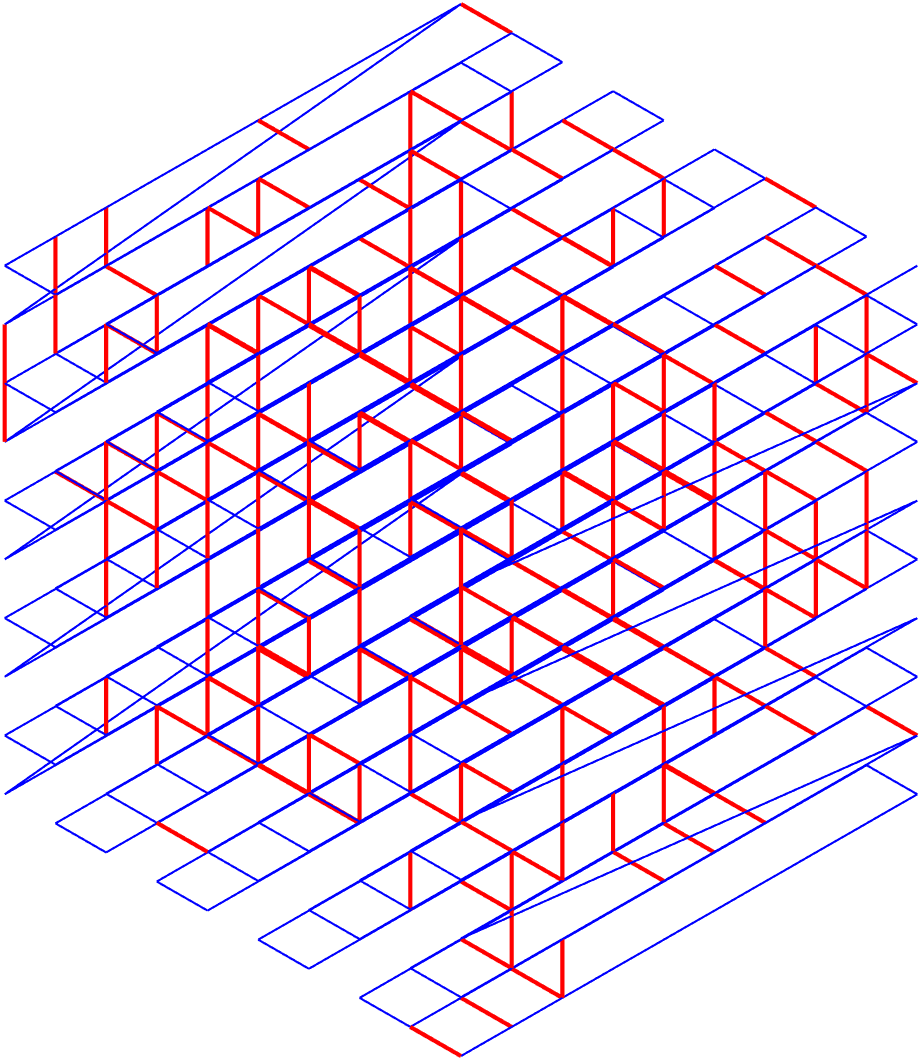}
  \caption{\textbf{\Cube{} experiments.} Example ground-truth realization of a
    synthetic \Cube{} dataset \cite{carlone2015lagrangian, rosen2019se} with
    $s = 10$ vertices per side and $p_{LC} = 0.1$. The robot's trajectory is
    shown in blue with loop closures shown in red.\label{fig:cube-dataset}}
\end{figure}

The bounds presented in our analysis depend upon knowledge of the noise
magnitude $\|\dQ\|_2$, which is unknown in practice. In light of this fact, we
examine empirically the behavior of the bounds as a function of the noise
parameters using synthetic data. Specifically, we use the \Cube{} dataset
\cite{carlone2015lagrangian, rosen2019se}, which consists of a set of vertices
(poses) organized in a three-dimensional cube, with $s$ vertices per dimension.
Consecutive poses have an ``odometry'' edge between them, and loop closures are
sampled randomly from the remaining edges with probability $p_{LC}$.
Measurements are generated by randomly sampling from the generative model
\eqref{eq:gen-model-pgo} with fixed noise parameters $\kappa$ and $\tau$ for all
measurements. Beyond providing access to the ground-truth rotations, this setup
allows us to compare the worst-case bounds with empirical performance in noise
regimes well outside the range typically encountered in real SLAM scenarios. A
sample configuration for the \Cube{} dataset is provided in Figure
\ref{fig:cube-dataset}.

\begin{figure}
  \centering
  \begin{subfigure}{0.5\columnwidth}
    \centering
    \includegraphics[width=1.0\columnwidth]{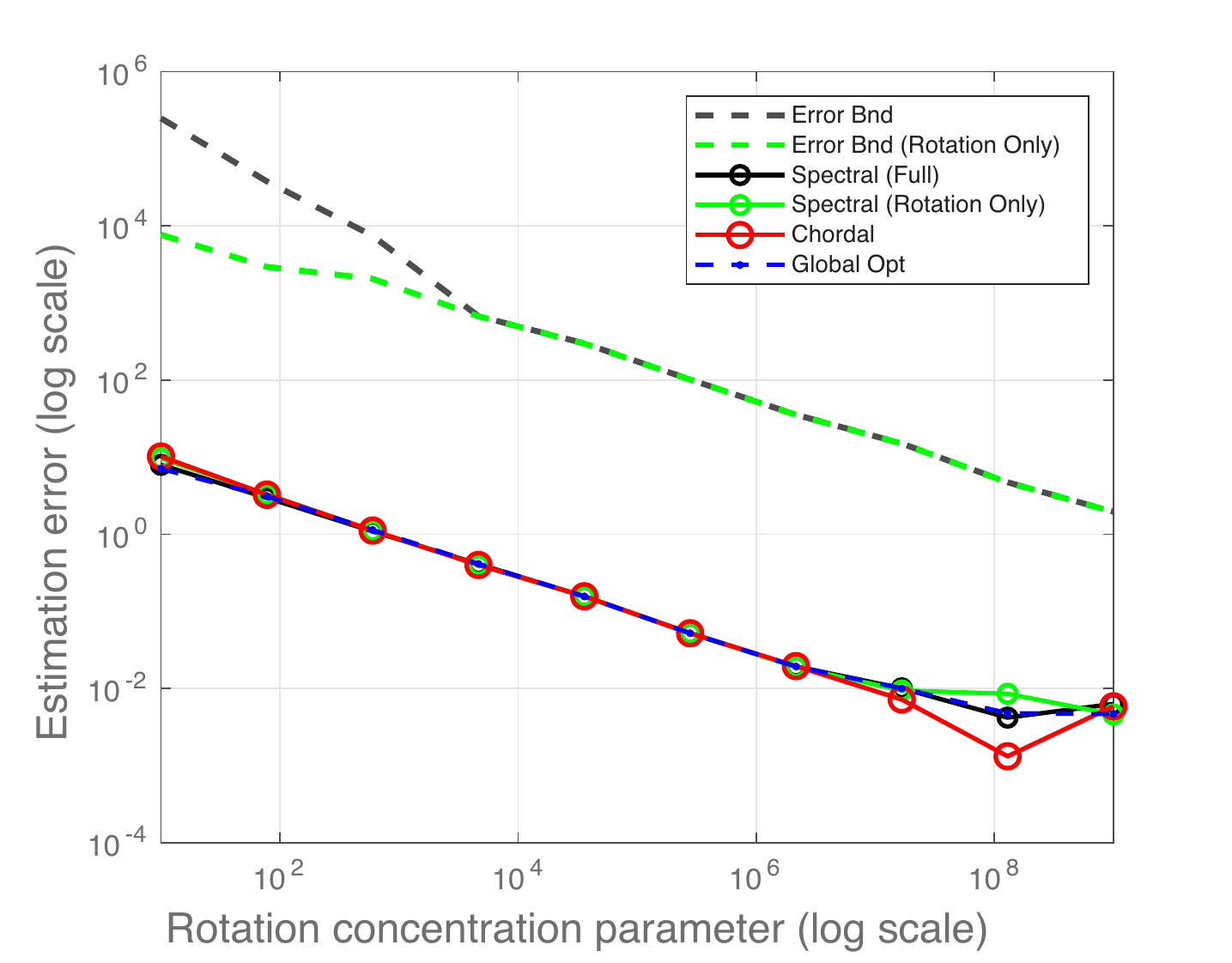}
    \caption{\label{subfig:bound-rotation}}
  \end{subfigure}%
  \begin{subfigure}{0.5\columnwidth}
    \centering
    \includegraphics[width=1.0\columnwidth]{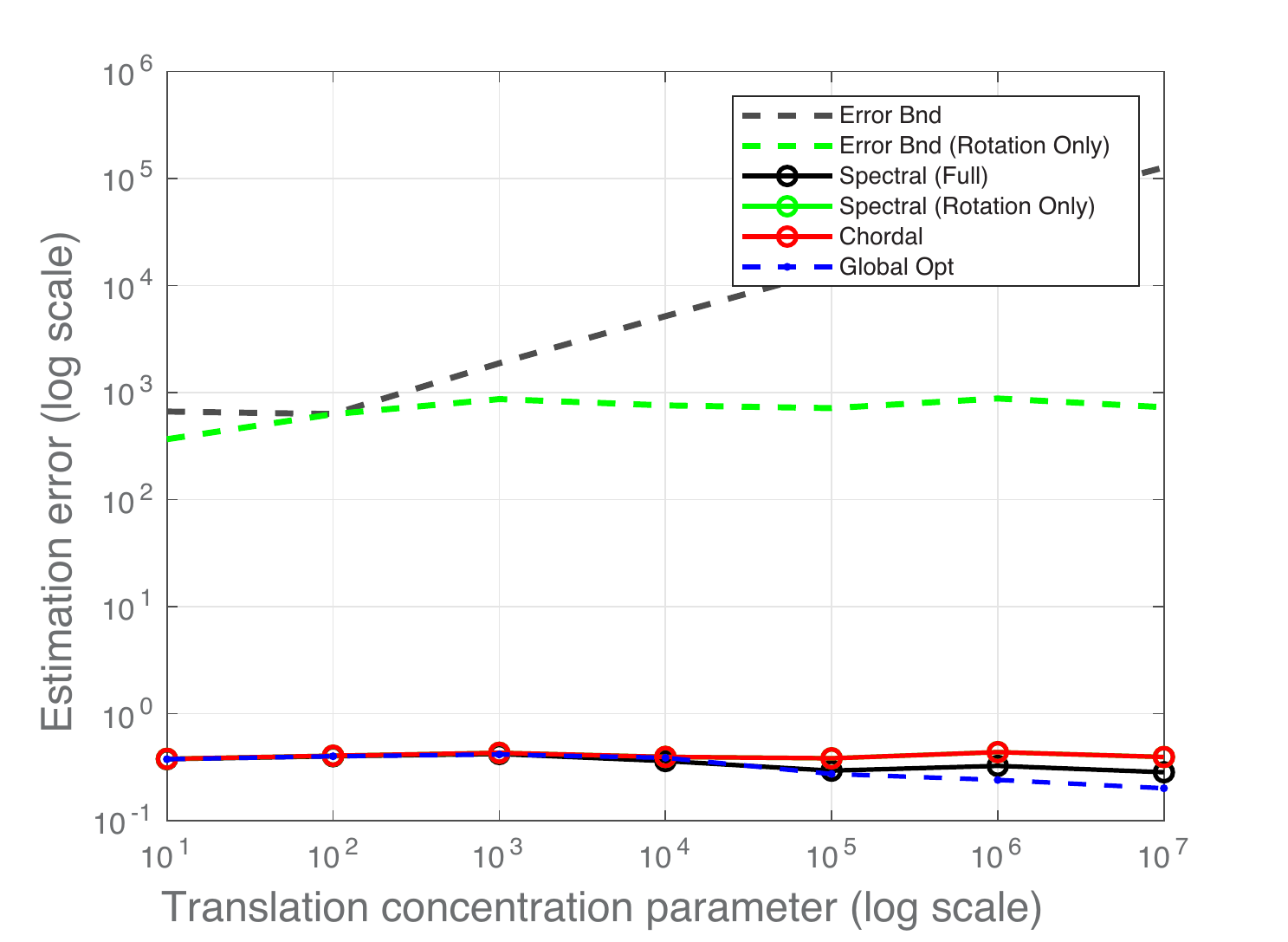}
    \caption{\label{subfig:bound-translation}}
  \end{subfigure}
  \begin{subfigure}{0.5\columnwidth}
    \centering
    \includegraphics[width=1.0\columnwidth]{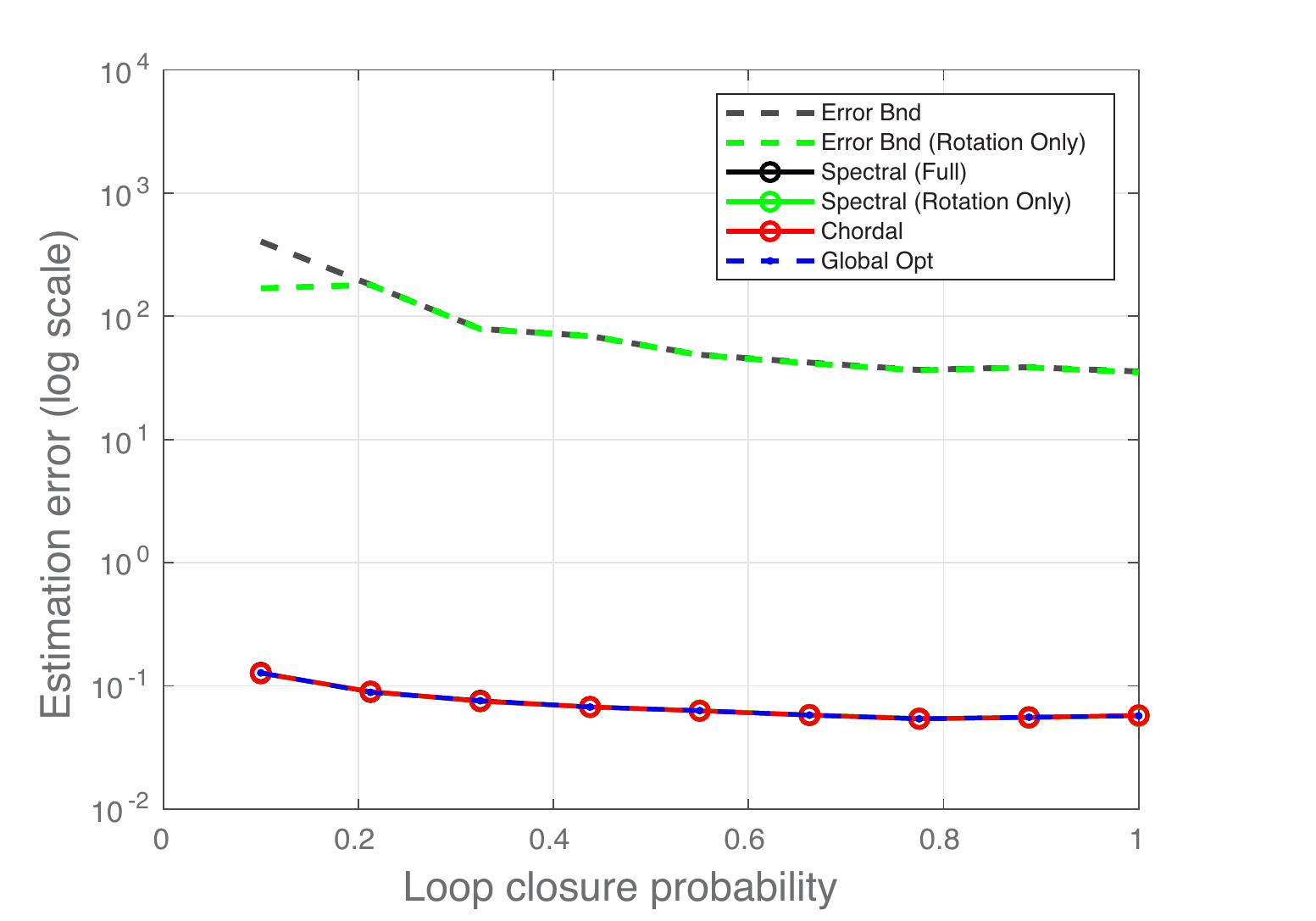}
    \caption{\label{subfig:bound-plc}}
  \end{subfigure}%
  \begin{subfigure}{0.5\columnwidth}
    \centering
    \includegraphics[width=1.0\columnwidth]{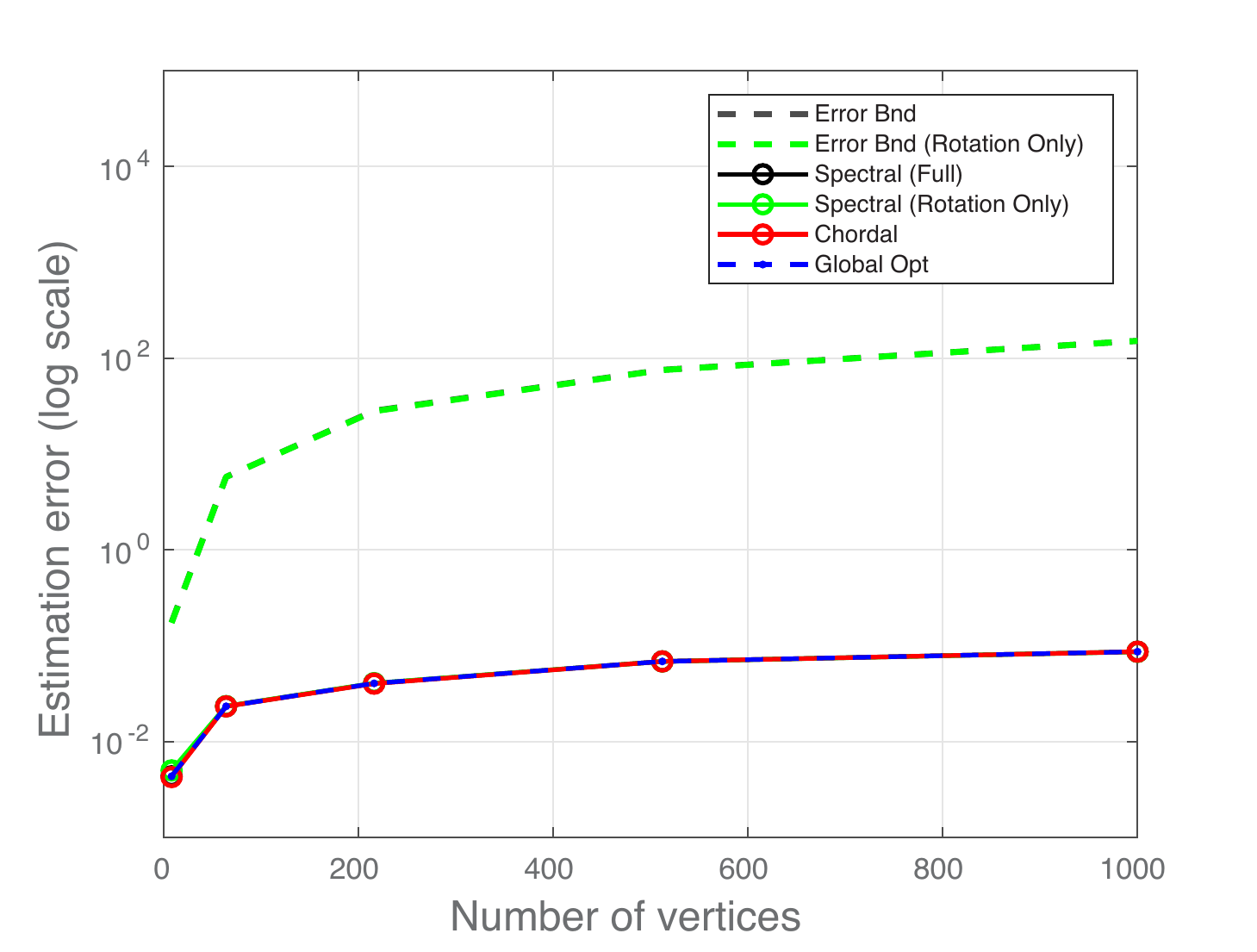}
    \caption{\label{subfig:bound-s}}
  \end{subfigure}
  \caption{\textbf{Influence of dataset parameters on the performance bounds for
      the \Cube{} experiments.} We examine empirically the change in the
    theoretical bounds \eqref{eq:spectral-error} and \eqref{eq:ronly-bound} as
    well as the estimation error of several pose-graph optimization estimates
    while varying (a) the rotation concentration parameter $\kappa$, (b) the
    translation concentration parameter $\tau$, (c) the probability of a loop
    closure $p_{LC}$, (d) the number of vertices $s^3$.\label{fig:cube-experiments}}
\end{figure}

\paragraph{Influence of noise parameters on performance bounds:}

In Figure \ref{fig:cube-experiments}, we study the performance of the spectral
initialization approach across a variety of noise configurations. In each case,
we provide the worst-case bounds \eqref{eq:spectral-error} and
\eqref{eq:ronly-bound} along with the empirical error of the different
estimators under consideration. In Figure \ref{subfig:bound-rotation}, we sample
\Cube{} problem instances with logarithmically spaced values of $\kappa$ while
fixing the other parameters: $\tau = 150$ (corresponding to an expected RMS
error of $0.14$ m), $p_{LC} = 0.2$, and $s = 10$. In Fig.
\ref{subfig:bound-translation}, we fix $\kappa = 10^5$ (corresponding to an
expected RMS error of approximately $0.1^{\circ}$), $p_{LC} = 0.25$ and $s = 10$
and sample problem instances with logarithmically spaced translation
concentration parameter $\tau$. In Fig. \ref{subfig:bound-plc}, we fix $\kappa =
10^5$, $\tau = 150$, $s = 10$ and vary $p_{LC}$ from 0 to 1.

Across a wide range of concentration parameters, the spectral initializations
attain very similar error to the global optimizer.\footnote{$\Ropt$ is the
  maximum likelihood estimator--the optimal point estimate given the data. Since
  there is noise in the data, it is conceivable that the maximum likelihood
  estimate might actually be farther away from the ground truth than a
  ``suboptimal'' estimate, which we observe in Fig.
  \ref{subfig:bound-rotation}.} In particular, their error often improves upon
the worst-case bounds \eqref{eq:spectral-error} and \eqref{eq:ronly-bound} by
orders of magnitude. This is consistent with earlier observations of
qualitatively similar bounds for phase synchronization \cite{preskitt2018phase}.
Moreover, in applications of rotation averaging and pose-graph optimization,
previous work has shown that the maximum likelihood estimator often attains
expected error close to the Cram\'er-Rao \emph{lower bound} (see
\cite{boumal2014cramer} for rotation averaging and \cite{chen2021cramer} for
pose-graph optimization). The behavior of the bounds when varying the
translation concentration parameter in Figure \ref{subfig:bound-translation} is
counterintuitive: while the spectral estimator improves with increasing $\tau$,
the bound suggests the opposite worst-case behavior. It seems the form of the
bounds we derive (including the translational terms) is not refined enough to
capture this behavior, and this certainly warrants further investigation. With
this exception, the bounds seem to accurately capture the behavior of the actual
estimation error.

\paragraph{Dependence on problem dimensionality:}

Due to the explicit appearance of the problem dimension $n$ in the bounds
\eqref{eq:spectral-error}, \eqref{eq:opt-error}, and
\eqref{eq:spectral-opt-dev}, it is interesting to consider how the number of
rotations to be estimated affects these bounds. In Figure \ref{subfig:bound-s},
we fix $\kappa = 10^5$, $\tau = 150$, $p_{LC} = 0.2$ and vary the number of
vertices in the \Cube{} dataset. Indeed, we find that the behavior of the
worst-case bounds suggests an unfavorable scaling in the problem dimension: at
$s^3 = 8$ vertices, the worst-case bound overestimates the true error by
approximately an order of magnitude; at $s^3 = 1000$, it overestimates the true
error by approximately 3 orders of magnitude. It is unclear, at present, whether
it is possible to remove this dependence on the problem dimension. A more
sophisticated analysis considering the specific structure of these matrices (as
defined in Appendix \ref{app:data-matrix}) may yield more refined bounds.

\subsection{Evaluation on standard SLAM benchmark datasets}\label{sec:benchmark-data}

In these experiments, we consider evaluation of the spectral initialization
method on several standard SLAM benchmark datasets. Figure
\ref{fig:spectral-global} provides a qualitative comparison of three techniques
for initialization: odometry only (i.e. composing measurements between
consecutive poses), the proposed spectral initialization approach, and the
globally optimal solution. We observe that spectral initialization provides
solutions that visually resemble the globally optimal solution. Table
\ref{table:stats} gives our quantitative results. For each method, we provide
the computation time, objective value, and number of iterations required for a
Riemannian trust-region (RTR) optimization method to converge to a critical point
when using that initialization. With the exception of
odometry-only initialization, all of the methods considered enabled the recovery
of (verifiably) globally optimal solutions;  that is, these initialization methods coupled with standard \emph{local} optimization techniques recovered
globally optimal solutions \emph{without} the need to explicitly solve a
large-scale semidefinite program.

Both of the spectral methods (using the ``full'' pose-graph optimization data
matrix $\nQ$ and the ``rotation only'' version using only $\MeasRotConLap$)
provide estimates competitive with the state-of-the-art chordal initialization
method, generally attaining near-optimal objective values.\footnote{Our
  current implementation is aimed at recovering high-precision eigenvector
  estimates, rather than expedient computation. Despite this, spectral
  initialization is often faster than the chordal approach, though occasionally
  this added precision leads to longer computation times than would be necessary
  to obtain a good estimate, e.g. on the \Garage{} dataset.} Interestingly, in
their work, \citet{moreira2021fast} found that the rotation-only spectral
estimator attains a higher cost on the \Sphere{} dataset than alternative
methods, as we do here; however, when we include the translation measurements,
we find that this discrepancy disappears. Similarly, the chordal estimator also
performs well on this dataset, despite the fact that, like the rotation-only
spectral initialization, it does not make use of translational measurements.

\begin{table}[t]
  \centering
  \setlength{\tabcolsep}{4pt}
  \begin{tabular}{*{7}{c}}
    \hline
    Dataset & & Odometry & Chordal & \textbf{Spectral (Rotation Only)} & \textbf{Spectral} & Global Opt. \\
    \hline
    \multirow{3}{*}{\Sphere{}}
            & Iter & 65 & 6 & 8 & 4 & \\
            & Cost & 1.14 $\times 10^9$ & 1971.17 & 5594.19  & 1742.75  & 1687 \\
            & Time (s) & -  & 0.707 & 0.602 & 0.779 & \\
    \hline
    \multirow{3}{*}{\Torus{}}
            & Iter & 32 & 5 & 5 & 4 & \\
            & Cost & 3.87 $\times 10^8$ & 24669.2 & 25833.2 & 24272.7 & 24227 \\
            & Time (s) & - & 1.316 & 1.501 & 1.199 & \\
    \hline
    \multirow{3}{*}{\Grid{}}
            & Iter & 30 & 6 & 6 & 4 & \\
            & Cost  & 1.97 $\times 10^{10}$ & 87252 & 86966.1 & 84486.4 & 84320 \\
            & Time (s) & - & 8.747 & 18.806 & 0.25 & \\
    \hline   
    \multirow{3}{*}{\Garage{}}
            & Iter & 1028 & 3 & 4 & 4 & \\
            & Cost & 2.31 $\times 10^9$ & 1.42 & 3.215 & 2.7 & 1.26 \\
            & Time (s) & - & 0.201 & 0.136 & 25.7 & \\
    \hline   
  \end{tabular}
  \caption{\textbf{Standard SLAM benchmarks} Objective value (cost) attained and
    computation time required for each initialization method on several SLAM
    benchmarks. We also report the
    number of iterations (Iter.) required for a Riemannian trust-region optimization method
    to converge to a critical point. Note that the reported computation time is only
    the time required to compute the initialization. Proposed approaches are \textbf{bold}.
  }
  \label{table:stats}
\end{table}

\section{Conclusion}\label{sec:conclusion}

In this work we presented the first initialization methods equipped with
\emph{explicit performance guarantees} adapted to the problems of pose-graph
SLAM and rotation averaging. Our approach is based upon a simple \emph{spectral
  relaxation} of the estimation problem, the form of which permits us to apply
eigenvector perturbation bounds to control the distance from our initialization
to \emph{both} the (latent) ground-truth \emph{and} the global minimizer of the
estimation problem (the \emph{maximum likelihood} estimate) as a function of the
measurement noise. Consistent with recent complementary work on
information-theoretic aspects
\cite{boumal2014cramer,chen2021cramer,khosoussi2014novel} and global
optimization methods \cite{rosen2019se,fan2020cpl,dellaert2020shonan} for SLAM
and RA, our bounds highlight the central role that spectral properties of the
measurement network\footnote{Specifically, the smallest nonzero eigenvalue
  $\lambda_{d+1}(\tQ)$, which can be thought of as a generalization of the
  algebraic connectivity of the classical graph Laplacian.} play in controlling
the accuracy of SLAM and RA solutions. Finally, we show experimentally that our
spectral estimator is very effective in practice, producing initializations of
comparable or superior quality at lower computational cost compared to existing
state-of-the-art techniques.



\appendix 

\section{Structure of the data matrices}\label{app:data-matrix}

In this appendix, we provide the definitions of the various matrices appearing
in the parameterization of the rotation averaging and pose-graph SLAM problems.
$\LapTranW$ and $\LapRotW$ denote the Laplacians of the translational weight
graph $\TranW \triangleq (\Nodes, \Edges, \{\transym_{ij}\})$ and rotational
weight graph $\RotW \triangleq (\Nodes, \Edges, \{\kappa_{ij}\})$, respectively,
with \emph{undirected edges} $\edge \in \Edges$. These are $n \times n$ matrices
with $i,j$-entries:
\begin{subequations}
  \begin{equation}
    \LapTranW_{ij} = \begin{cases} \sum_{e \in \incEdges(i)} \transym_e,& i = j, \\
      -\transym_{ij},& \edge \in \Edges, \\
    0,& \edge \notin \Edges, \end{cases}
   \label{eq:laptranw}
 \end{equation}
 \begin{equation}
   \LapRotW_{ij} = \begin{cases} \sum_{e \in \incEdges(i)} \kappa_e,& i = j, \\
     -\kappa_{ij},& \edge \in \Edges, \\
     0,& \edge \notin \Edges. \end{cases}
     \label{eq:laprotw}
   \end{equation}
\end{subequations}
$\MeasRotConLap$ denotes the \emph{connection Laplacian} for the
rotational measurements, which is a $dn \times dn$ symmetric block-diagonal
matrix with $d \times d$ blocks determined by:
\begin{subequations}
  \begin{equation}
    \MeasRotConLap_{ij} \triangleq \begin{cases} d_i^{\rotsym } I_d,& i = j,\\
      -\kappa_{ij} \nrot_{ij},& \edge \in \Edges,\\
      0_{d \times d},& \edge \notin \Edges, \end{cases}\label{eq:lgrho}
  \end{equation}
  \begin{equation}
    d_i^{\rotsym} \triangleq \sum_{e \in \incEdges(i)} \kappa_e, \label{eq:rotdegree}
  \end{equation}
\end{subequations}
where $\incEdges(i)$ denotes the set of edges \emph{incident to} node $i$.
$\nCrossTerms \in \R^{n \times dn}$ denotes the $(1 \times d)$-block-structured
matrix with $\dedge$ block given by:
\begin{equation}
  \nCrossTerms_{ij} \triangleq \begin{cases} \sum_{e \in \outEdges(j)} \transym_e
    \ntran_e\transpose,& i = j, \\
    -\transym_{ji}\ntran_{ji}\transpose,& (j,i) \in \dEdges,\\
    0_{1 \times d},& \text{otherwise}.
  \end{cases}\label{eq:ncrossterms}
\end{equation}
Finally, $\nOuterProducts \in \SBD(d, n)$ denotes the symmetric block-structured diagonal matrix given by:
\begin{equation}
  \begin{aligned}
    \nOuterProducts &\triangleq \Diag(\nOuterProducts_1, \ldots, \nOuterProducts_n) \in \SBD(d, n) \\
    \nOuterProducts_i &\triangleq \sum_{e \in \outEdges(i)} \transym_e\ntran_e\ntran_e\transpose,
  \end{aligned}\label{eq:nouterproducts}
\end{equation}
where $\outEdges(i)$ denotes the set of edges \emph{leaving} node $i$. With
these definitions in hand, the translational data matrix $\nQtran$ can be
defined as:
\begin{equation}
  \nQtran = \nOuterProducts - \nCrossTerms\transpose \LapTranW \pinv \nCrossTerms.
\end{equation}

\section{Analysis of the spectral relaxation}\label{app:spectral-analysis}

\subsection{Recovering minimizers of Problem \ref{prob:spectral-relaxation} as eigenvectors}\label{app:spectral-relaxation-analysis}
In this section we derive a closed-form description of the \emph{global} minimizers $Y^{\optsym}$ of the spectral relaxation Problem \ref{prob:spectral-relaxation}. Specifically, we prove the following theorem:

\begin{thm}[Global minimizers of the spectral relaxation]\label{thm:spectral-relaxation-minimizer}
Let $\lambda_1(\nQ) \le \dotsb \le \lambda_d(\nQ)$ be the $d$ smallest eigenvalues of $\nQ$.  Then $Y^{\optsym} \in \R^{d \times dn}$ is a global minimizer of the spectral relaxation Problem \ref{prob:spectral-relaxation} if and only if
\begin{equation}
\label{eq:global_minimizer_of_spectral_relaxation}
Y^{\optsym} = \sqrt{n} 
\begin{pmatrix}
 v_{\sigma(1)} \\
 \vdots \\
 v_{\sigma(d)}
\end{pmatrix} \in \R^{d \times dn}
\end{equation}
where $v_1, \dotsc, v_d \in \R^{dn}$ are a set of orthonormal eigenvectors corresponding to the $d$ smallest eigenvalues, and $\sigma$ is a permutation.  The corresponding optimal value of Problem \ref{prob:spectral-relaxation} is:
\begin{equation}
    p^*_{{S}} = n \sum_{i=1}^d \lambda_{i}(\nQ). \label{eq:spectral-opt-val}
\end{equation}
\end{thm}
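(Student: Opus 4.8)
The plan is to recognize Problem \ref{prob:spectral-relaxation} as a rescaled instance of the classical Ky Fan trace-minimization problem, and to establish both the optimal value \eqref{eq:spectral-opt-val} and the characterization \eqref{eq:global_minimizer_of_spectral_relaxation} of the minimizers directly. First I would clear the scaling by writing $Z \triangleq Y/\sqrt{n}$, so that the constraint $YY\transpose = nI_d$ becomes $ZZ\transpose = I_d$ (i.e. $Z$ has orthonormal rows) and the objective becomes $\tr(\nQ Y\transpose Y) = n\,\tr(\nQ Z\transpose Z)$. Setting $P \triangleq Z\transpose Z$, the matrix $P$ is exactly a rank-$d$ orthogonal projector; conversely every rank-$d$ orthogonal projector is realized as $Z\transpose Z$ for some $Z$ with orthonormal rows, the fiber of the map $Z \mapsto P$ being the left action $Z \mapsto GZ$, $G \in \Orthogonal(d)$. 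Hence minimizing over feasible $Y$ is equivalent to minimizing $n\,\tr(\nQ P)$ over rank-$d$ orthogonal projectors $P$.

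For the optimal value I would pass to the eigendecomposition $\nQ = \sum_{k=1}^{dn} \lambda_k u_k u_k\transpose$ with $\lambda_1 \le \dotsb \le \lambda_{dn}$ and $\{u_k\}$ orthonormal, and write $\tr(\nQ P) = \sum_k \lambda_k p_k$ with $p_k \triangleq u_k\transpose P u_k$. Since $0 \preceq P \preceq I$ and $\tr(P) = d$, the weights satisfy $p_k \in [0,1]$ and $\sum_k p_k = d$; minimizing the linear functional $\sum_k \lambda_k p_k$ over this polytope immediately yields the lower bound $\sum_{i=1}^d \lambda_i(\nQ)$, attained by $P_0 = \sum_{i=1}^d u_i u_i\transpose$. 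Undoing the scaling gives $p^*_{\mathrm S} = n\sum_{i=1}^d \lambda_i(\nQ)$, which is \eqref{eq:spectral-opt-val}.

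The characterization of minimizers is where the real work lies. Sufficiency is a direct substitution: for the matrix $Y^\optsym$ in \eqref{eq:global_minimizer_of_spectral_relaxation}, orthonormality of the $v_i$ gives $Y^\optsym (Y^\optsym)\transpose = nI_d$, and since the $v_i$ are eigenvectors for the $d$ smallest eigenvalues the objective evaluates to $n\sum_{i=1}^d \lambda_i$ regardless of the permutation $\sigma$. For necessity I would analyze the equality case of the linear program above: writing $\mu \triangleq \lambda_d(\nQ)$, any optimal weight vector must have $p_k = 1$ whenever $\lambda_k < \mu$ and $p_k = 0$ whenever $\lambda_k > \mu$ (otherwise one could strictly decrease the objective by shifting mass onto a smaller eigenvalue). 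The key step is to upgrade these scalar identities to statements about $P$: since $I - P \succeq 0$, the equality $u_k\transpose P u_k = 1 = u_k\transpose u_k$ forces $(I-P)u_k = 0$, i.e. $P u_k = u_k$; likewise $P \succeq 0$ together with $u_k\transpose P u_k = 0$ forces $P u_k = 0$. Consequently the range of $P$ contains every $u_k$ with $\lambda_k < \mu$, is orthogonal to every $u_k$ with $\lambda_k > \mu$, and therefore decomposes as $\Span\{u_k : \lambda_k < \mu\}$ together with a complementary subspace lying inside the $\mu$-eigenspace of $\nQ$. In particular $\Span(P)$ is $\nQ$-invariant and spanned by eigenvectors for the $d$ smallest eigenvalues, so the rows of $Y^\optsym/\sqrt n$ form an orthonormal eigenbasis of this space, which is the claimed form.

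The main obstacle is precisely this necessity argument, and within it the handling of eigenvalue degeneracy. When $\lambda_d = \lambda_{d+1}$, or when the bottom $d$ eigenvalues are not all distinct, the optimal face of the linear program is higher-dimensional, the weights $p_k$ on the $\mu$-eigenspace are only constrained to sum to the correct value, and the minimizing subspace $\Span(P)$ is no longer unique; the proof must then conclude only that $\Span(P)$ is \emph{some} $d$-dimensional subspace built from the relevant eigenspaces, taking care that the resulting orthonormal basis genuinely consists of eigenvectors. I would also flag that, because the passage $Y \mapsto P$ quotients out the left $\Orthogonal(d)$ action, a minimizer is determined only up to this gauge symmetry (already noted after Problem \ref{prob:spectral-relaxation}); accordingly \eqref{eq:global_minimizer_of_spectral_relaxation} is best read as a representative of the minimizing orbit, with the permutation $\sigma$ and the choice of eigenbasis encoding the residual freedom.
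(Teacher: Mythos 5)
Your route is genuinely different from the paper's, and most of it is stronger. The paper performs the same rescaling $Z = n^{-1/2}Y\transpose$ to pose the problem over $\Stiefel(d,dn)$, but then settles the key question by citation: it quotes \citet[Section 4.8.2]{absil2009optimization} for the claim that $Z$ is a critical point of $\tr(n\nQ ZZ\transpose)$ \emph{if and only if} its columns are orthonormal eigenvectors of $n\nQ$, and then compares objective values across critical points. You instead quotient out the gauge symmetry via $P = Z\transpose Z$, identify the feasible $P$ with the rank-$d$ orthogonal projectors, and minimize the linear functional $\sum_k \lambda_k p_k$, $p_k = u_k\transpose P u_k$, over the polytope $0 \le p_k \le 1$, $\sum_k p_k = d$. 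This Ky Fan--style argument is elementary and self-contained; it establishes the optimal value \eqref{eq:spectral-opt-val} and the ``if'' direction rigorously, and your equality-case analysis (upgrading $u_k\transpose P u_k \in \{0,1\}$ to $Pu_k = u_k$ or $Pu_k = 0$ via semidefiniteness) handles eigenvalue ties correctly.

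The genuine gap is the final inference of your necessity argument: from ``the range of $P$ is spanned by eigenvectors for the $d$ smallest eigenvalues'' it does not follow that the rows of $Y^{\optsym}/\sqrt{n}$ are eigenvectors---they are only an orthonormal \emph{basis} of that subspace. Moreover, this gap cannot be closed, because the ``only if'' direction is false as literally stated: whenever the $d$ smallest eigenvalues are not all equal, take $Y^{\optsym}$ as in \eqref{eq:global_minimizer_of_spectral_relaxation} and $G \in \SO(d)$ a rotation mixing two rows whose eigenvalues differ; then $GY^{\optsym}$ is feasible and attains the same objective value (the cost depends on $Y$ only through $P$), yet its rows are not eigenvectors of $\nQ$. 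This is consistent with the paper's own footnote in Section \ref{sec:spectral-init}, which notes that minimizers come in $\Orthogonal(d)$ orbits. So your closing caveat---that \eqref{eq:global_minimizer_of_spectral_relaxation} must be read as a representative of the minimizing orbit---is not an optional remark but the correct statement, and it is exactly what your projector argument proves: $Y$ is a global minimizer iff $Y = GY^{\optsym}$ for some $G \in \Orthogonal(d)$ and some $Y^{\optsym}$ of the form \eqref{eq:global_minimizer_of_spectral_relaxation}, equivalently iff the row space of $Y$ is spanned by eigenvectors for the $d$ smallest eigenvalues. That corrected form is all the paper needs downstream, since Lemma \ref{lem:alignment-eigvecs-trot} and everything after it measure error only in orbit distances. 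Note that the paper's own proof stumbles at the same point, just less visibly: since $\tr(n\nQ ZZ\transpose)$ is invariant under $Z \mapsto ZQ$, $Q \in \Orthogonal(d)$, its critical points form $\Orthogonal(d)$ orbits and cannot all have eigenvector columns; the classification it quotes is the Brockett-cost result for $\tr(Z\transpose A Z N)$ with \emph{distinct} diagonal weights $N$, which does not cover the case $N = I$ at hand. Finish your proof by replacing ``form an orthonormal eigenbasis'' with ``span the same subspace as such an eigenbasis, hence agree with it up to a left $\Orthogonal(d)$ factor,'' and state the theorem accordingly; you will then have a complete and fully self-contained proof of the corrected claim.
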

\begin{proof}
Our approach will be to reduce Problem \ref{prob:spectral-relaxation} to an equivalent problem whose critical points are already well-understood.  To that end, let $Z \triangleq n^{-1/2}Y\transpose \in \R^{dn \times d}$, so that $Y = \sqrt{n}Z\transpose$. Substitution
  into Problem \ref{prob:spectral-relaxation} then gives:
  \begin{equation}
  \label{eq:spectral_relaxation_in_terms_of_Z}
    \begin{aligned}
      p^*_{{S}} = &\min_{Z \in \R^{dn \times d}} \tr\left(n \nQ ZZ\transpose\right) \\
      &\text{s.t.}\ Z\transpose Z = I_d.
    \end{aligned}
  \end{equation}
  Observe that $Z\transpose Z = I_d$ if and only if $Z \in \Stiefel(d, dn)$; therefore, we may in turn rewrite \eqref{eq:spectral_relaxation_in_terms_of_Z} as the following \emph{unconstrained} minimization over the Stiefel manifold: 
 \begin{equation}
    p^*_{{S}} = \min_{Z \in \Stiefel(d,dn)} \tr\left(n \nQ ZZ\transpose\right). \label{eq:rayleigh}
 \end{equation}
 Note that we may now recognize \eqref{eq:rayleigh} as the minimization of a generalized Rayleigh quotient over a Stiefel manifold.  This problem has been extensively studied; in particular, \citet[Section 4.8.2]{absil2009optimization} provides an elementary proof that 
 \begin{equation}
\label{eq:Z_as_column_matrix}
Z = (z_1, \dotsc, z_d) \in \R^{dn \times d}
 \end{equation}
is a critical point of \eqref{eq:rayleigh} \emph{if and only if} its columns $\lbrace z_i \rbrace_{i=1}^d \subset \R^{dn}$ are an orthonormal set of eigenvectors for $n\nQ$.  Moreover, substituting \eqref{eq:Z_as_column_matrix} into the objective in \eqref{eq:spectral_relaxation_in_terms_of_Z} and exploiting the fact that $\lbrace z_i \rbrace_{i=1}^d \subset \R^{dn}$ are pairwise mutually-orthogonal eigenvectors, we find that the corresponding objective value is:
   \begin{equation}
   \label{eq:critical_point_objective_value}
    \tr\left( n \nQ Z Z\transpose \right) = n \sum_{i=1}^d \mu_i,
  \end{equation}
  where $\mu_i$ is the eigenvalue corresponding to $z_i$.  Since every critical point of \eqref{eq:rayleigh} is of the form \eqref{eq:Z_as_column_matrix}--\eqref{eq:critical_point_objective_value}, it follows that the \emph{global minimizers} $Z^{\optsym}$ are precisely those critical points whose columns are composed of the eigenvectors $v_1, \dotsc, v_d \in \R^{dn}$ corresponding to the $d$ \emph{smallest} eigenvalues of $\nQ$. Recovering the corresponding optimal $Y^{\optsym}$ from $Z^{\optsym}$ then gives \eqref{eq:global_minimizer_of_spectral_relaxation} and \eqref{eq:spectral-opt-val}.
\end{proof}

\subsection{Symmetric perturbations of symmetric matrices}\label{app:eigen-perturbation}

Recall that $\trot$ and $\eigvecs$ are solutions of the noiseless and noisy
versions of the spectral relaxation in Problem \ref{prob:spectral-relaxation}.
In Appendix \ref{app:spectral-relaxation-analysis} we showed how these can be
directly obtained from the Stiefel manifold elements giving the $d$
minimum eigenvectors for their corresponding data matrices. The Davis-Kahan
Theorem is a classical result in linear algebra that measures the perturbation
of a matrix's eigenvectors under a symmetric perturbation of that matrix
\cite{stewart1990matrix}. Therefore, we make use of this theorem to derive a
bound on the estimation error of a spectral estimator as a function of the noise
in the data matrix. In particular, the proof of Lemma
\ref{lem:alignment-eigvecs-trot} (and consequently Theorem
\ref{thm:spectral-error}) relies on a particular variant of the Davis-Kahan
$\sin \theta$ Theorem \cite[Theorem 2]{yu2015useful}. Here, we briefly restate
the main result of \cite{yu2015useful} and give a proof of Lemma
\ref{lem:alignment-eigvecs-trot}.

\begin{thm}[\citet{yu2015useful}, Theorem 2]\label{thm:useful-dk-bound}
  Let $\Sigma$, $\hat{\Sigma} \in \R^{p \times p}$ be symmetric, with
  eigenvalues $\lambda_{1} \leq \ldots \leq \lambda_p$ and $\hat{\lambda}_1 \leq
  \ldots \leq \hat{\lambda}_p$ respectively. Fix $1 \leq r \leq s \leq p$ and
  assume that $\min(\lambda_{r} - \lambda_{r-1}, \lambda_{s+1} - \lambda_{s}) > 0$,
  where $\lambda_{0} \triangleq -\infty$ and $\lambda_{p+1} \triangleq \infty$.
  Let $d \triangleq s - r + 1$, and let $V = (v_r, v_{r+1}, \ldots , v_s) \in
  \R^{p \times d}$ and $\hat{V} = (\hat{v}_r, \hat{v}_{r+1}, \ldots , \hat{v}_s)
  \in \R^{p \times d}$ have orthonormal columns satisfying $\Sigma v_j =
  \lambda_j v_j$ and $\hat{\Sigma}\hat{v}_j = \hat{\lambda}_j\hat{v}_j$ for $j =
  r, r + 1, \ldots , s$. Then there exists an orthogonal matrix $G \in
  \Orthogonal(d)$ such that
  \begin{equation}
    \| \hat{V}G - V \|_F \leq \frac{2^{3/2}\min(d^{1/2}\|\hat{\Sigma} - \Sigma\|_{op}, \|\hat{\Sigma} - \Sigma \|_F)}{\min(\lambda_{r} - \lambda_{r-1}, \lambda_{s+1} - \lambda_{s})}.
  \end{equation}
\end{thm}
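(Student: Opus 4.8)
The plan is to derive Theorem~\ref{thm:useful-dk-bound} in two stages: first invoke the classical Davis--Kahan $\sin\Theta$ theorem to control the principal angles between the relevant invariant subspaces of $\Sigma$ and $\hat\Sigma$, and then convert that subspace bound into the stated bound on the \emph{orthogonally aligned} Frobenius distance $\min_{G \in \Orthogonal(d)}\|\hat V G - V\|_F$. Since this is a restatement of a published result (\citet{yu2015useful}, Theorem 2), the proof is really an exercise in assembling two classical ingredients and tracking the constants, and in the paper proper it would suffice to cite their argument; what follows is the route I would take to reconstruct it.

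First I would set up the principal-angle formalism. Let $\Theta = \Diag(\theta_1, \dots, \theta_d)$ collect the principal angles between the column spaces of $V$ and $\hat V$, characterized by the fact that the singular values of $\hat V\transpose V$ are $\cos\theta_1, \dots, \cos\theta_d$ (so the angles lie in $[0,\pi/2]$), and write $\sin\Theta$, $\cos\Theta$ for the corresponding diagonal matrices. I would then invoke a standard form of the Davis--Kahan $\sin\Theta$ theorem, whose natural gap is a \emph{cross-spectral} separation between the perturbed eigenvalues in the target block $\{\lambda_r, \dots, \lambda_s\}$ and the unperturbed eigenvalues outside it. The main work is to convert this into a bound depending only on the one-sided \emph{population} eigengap $\delta \triangleq \min(\lambda_r - \lambda_{r-1}, \lambda_{s+1} - \lambda_s) > 0$: via Weyl's inequality together with a short two-case analysis (perturbation small versus large relative to $\delta$, using the trivial bound $\|\sin\Theta\|_F \le \sqrt{d}$ in the large case), one obtains the clean one-sided gap at the cost of a factor of $2$. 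Combining this with the elementary inequality $\|\sin\Theta\|_F \le d^{1/2}\|\sin\Theta\|_{op}$ and applying the argument to each norm produces $\|\sin\Theta\|_F \le 2\min(d^{1/2}\|\hat\Sigma - \Sigma\|_{op}, \|\hat\Sigma - \Sigma\|_F)/\delta$.

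The second stage is a Procrustes-type computation relating the aligned Frobenius distance to $\|\sin\Theta\|_F$. For orthonormal $V, \hat V$ I would expand $\|\hat V G - V\|_F^2 = 2d - 2\tr(G\transpose \hat V\transpose V)$, and maximize $\tr(G\transpose \hat V\transpose V)$ over $G \in \Orthogonal(d)$ using the singular value decomposition $\hat V\transpose V = U(\cos\Theta)W\transpose$; the optimizer $G = U W\transpose$ gives $\min_{G}\|\hat V G - V\|_F^2 = 2\sum_{i}(1 - \cos\theta_i)$. The inequality $1 - \cos\theta_i \le \sin^2\theta_i$ (from $\sin^2\theta_i = (1-\cos\theta_i)(1+\cos\theta_i) \ge 1 - \cos\theta_i$, valid since $\cos\theta_i \ge 0$) then yields $\min_{G}\|\hat V G - V\|_F \le \sqrt{2}\,\|\sin\Theta\|_F$. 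Chaining this with the Stage-1 bound produces exactly the constant $2^{3/2} = \sqrt{2}\cdot 2$ appearing in the theorem.

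I expect the main obstacle to be Stage~1 rather than Stage~2. The alignment computation is routine linear algebra, but correctly invoking the right classical form of Davis--Kahan, passing from its cross-spectral gap to the one-sided population gap $\delta$ (which is where the factor of $2$ originates), and simultaneously handling the interior-block case — where one genuinely needs \emph{both} of the gaps $\lambda_r - \lambda_{r-1}$ and $\lambda_{s+1} - \lambda_s$ through $\delta = \min(\cdot,\cdot)$ — together with the min over operator- and Frobenius-norm numerators, is the delicate part where the constants must be tracked with care.
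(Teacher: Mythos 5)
First, a point of context: the paper does not actually prove this statement --- Theorem~\ref{thm:useful-dk-bound} is quoted verbatim (modulo the eigenvalue-ordering convention) from \citet{yu2015useful} and is used purely as an imported tool in the proof of Lemma~\ref{lem:alignment-eigvecs-trot}. So the comparison here is necessarily with the proof in the cited source, not with anything in this paper. Your two-stage architecture does match that source: their Theorem~2 is deduced from their Theorem~1 (the $\sin\Theta$ bound with the population eigengap) by exactly your Stage-2 Procrustes computation, and your Stage~2 is correct as written: $\min_{G \in \Orthogonal(d)}\|\hat{V}G - V\|_F^2 = 2\sum_i (1-\cos\theta_i) \leq 2\sum_i \sin^2\theta_i = 2\|\sin\Theta\|_F^2$, which is the source of the factor $\sqrt{2}$ and hence of $2^{3/2}$.

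Stage~1 as you describe it, however, has a genuine gap. Write $E = \hat{\Sigma}-\Sigma$ and $\delta = \min(\lambda_r - \lambda_{r-1},\, \lambda_{s+1}-\lambda_s)$. The ``Weyl plus two-case'' argument establishes the \emph{operator-norm} arm of the min but not the \emph{Frobenius} arm. When $\|E\|_{op} \leq \delta/2$, Weyl's inequality guarantees a cross-spectral gap of at least $\delta/2$ and classical Davis--Kahan gives both $\|\sin\Theta\|_F \leq 2\sqrt{d}\,\|E\|_{op}/\delta$ and $\|\sin\Theta\|_F \leq 2\|E\|_F/\delta$; that case is fine. But when $\|E\|_{op} > \delta/2$, the trivial bound $\|\sin\Theta\|_F \leq \sqrt{d}$ closes only the operator arm (since then $2\sqrt{d}\,\|E\|_{op}/\delta > \sqrt{d}$); it does \emph{not} close the Frobenius arm, because $2\|E\|_F/\delta$ can be far smaller than $\sqrt{d}$ --- e.g.\ for a rank-one perturbation with $\|E\|_F = \|E\|_{op} = \delta$ one has $2\|E\|_F/\delta = 2 < \sqrt{d}$ once $d \geq 5$, and no cross-spectral gap is available with which to invoke Davis--Kahan. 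So your case analysis cannot establish $\|\sin\Theta\|_F \leq 2\|E\|_F/\delta$ in the large-perturbation regime, and that is precisely the regime where this arm of the theorem is nontrivial: one must show that rotating many principal angles necessarily costs proportionally much Frobenius mass of $E$, which the trivial bound $\sqrt{d}$ does not capture. The published proof has to work harder at exactly this point (it does not reduce to the naive case split), so a self-contained reconstruction would as well. Since the paper itself only ever needs this statement as a citation, the clean fix is to do what the paper does: cite \citet{yu2015useful} for the $\sin\Theta$ bound, and reserve your (correct) Stage-2 argument for the alignment step if a self-contained account of that step is desired.
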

With this result in hand, we are ready to prove Lemma
\ref{lem:alignment-eigvecs-trot}.
\begin{proof}[Proof of Lemma \ref{lem:alignment-eigvecs-trot}]
  The data matrices $\nQ$ and $\tQ$ are symmetric $dn \times dn$ matrices with
  eigenvalues $\lambda_1 \leq \ldots \leq \lambda_{dn}$ and $\noisy{\lambda}_1
  \leq \ldots \leq \noisy{\lambda}_{dn}$, respectively. From Theorem
  \ref{thm:spectral-relaxation-minimizer} we have that the $d$ \emph{normalized}
  eigenvectors corresponding to $\lambda_1, \ldots , \lambda_d$ of $\tQ$ and
  $\noisy{\lambda}_1, \ldots , \noisy{\lambda}_{d}$ are exactly
  $n^{-1/2}\trot\transpose$ and $n^{-1/2}
  {\eigvecs}\transpose$, respectively. Then, letting $r = 1$ and $s = d$ and
  applying Theorem \ref{thm:useful-dk-bound}, there exists an orthogonal matrix
  $G \in \Orthogonal(d)$ such that:
  \begin{equation}
    \frac{1}{\sqrt{n}} \| {\eigvecs} \transpose G - \trot \transpose \|_F \leq \frac{2\sqrt{2d}\|\nQ - \tQ\|_2}{\lambda_{d+1}(\tQ) - \lambda_d(\tQ)}.
  \end{equation}
  Multiplying both sides of this expression by $\sqrt{n}$, we have:
  \begin{equation}
    \| {\eigvecs} \transpose G - \trot \transpose \|_F \leq \frac{2\sqrt{2dn}\|\nQ - \tQ\|_2}{\lambda_{d+1}(\tQ) - \lambda_d(\tQ)}.
  \end{equation}
  Now, by definition $\dQ = \nQ - \tQ$. If we assume $\Graph$ is
  connected,\footnote{ It is not particularly restrictive to assume that
    $\Graph$ is connected. In the case that $\Graph$ is not connected, the
    estimation problem splits over the connected components of $\Graph$, and all
    of our results hold separately for each connected component.} from
  \cite[Lemma 8]{rosen2019se} we have that $\lambda_{d+1}(\tQ) > 0$. Since
  $\trot \in \ker(\tQ)$, we know that $\lambda_d(\tQ) = 0$ and the above expression
  simplifies to:
  \begin{equation}
    \| {\eigvecs} \transpose G - \trot \transpose \|_F \leq \frac{2\sqrt{2dn}\|\dQ\|_2}{\lambda_{d+1}(\tQ)}.
  \end{equation}
  Taking the transpose of the terms inside the norm gives the desired
  result.
\end{proof}

\section{Proof of the main results}\label{sec:proof-main-results}

In this appendix, we prove the main results, i.e. Theorem
\ref{thm:spectral-error}, Theorem \ref{thm:opt-error}, and Corollary
\ref{cor:rinit-vs-ropt}.

\subsection{An upper bound for the estimation error in Problem \ref{prob:spectral-relaxation}}\label{app:proof-spectral-error}

\begin{proof}[Proof of Theorem \ref{thm:spectral-error}]
  To simplify the subsequent derivation, we will assume without loss of generality that
  $\trot$ and $\eigvecs$ are the representatives of their orbits satisfying
  $\Oorbdist(\trot, \eigvecs) = \|\trot -
  \eigvecs\|_F$. Recall from the definition of $\Sorbdist(\trot, \Rinit)$ that:
  \begin{equation}
    \Sorbdist(\trot, \Rinit) = \min_{G \in \SO(d)} \| \trot - G \Rinit \|_F.
  \end{equation}
  Therefore, we have:
  \begin{equation}
    \begin{aligned}
    \Sorbdist(\trot, \Rinit)^2 &= \min_{G \in \SO(d)} \|\trot - G \Rinit \|_F^2 \\
                               &\leq \|\trot - \Rinit \|_F^2, \\
                               &= \sum_{i=1}^n \|\trot_i - \SOrounded{\eigvecs_i} \|_F^2, \label{eq:blockwise-init-error}
    \end{aligned}
  \end{equation}
  where in the last line we have used the fact that $\Rinit$ consists of the
  projections of individual $(d \times d)$ blocks of $\eigvecs$ onto $\SO(d)$.
  From Lemma \ref{lem:so-to-o-rounding}, we have that each of the $n$ summands
  above satisfies:
  \begin{align}
    \|\trot_i - \SOrounded{\eigvecs_i}\|_F^2 &\leq 4 \| \trot_i - \eigvecs_i \|_F^2. \label{eq:elementwise-bound}
  \end{align}
  This, in turn, gives a corresponding bound on the summation:
  \begin{equation}
    \begin{aligned}
    \sum_{i=1}^n \|\trot_i - \SOrounded{\eigvecs_i}\|_F^2 &\leq 4 \sum_{i=1}^n \| \trot_i - \eigvecs_i \|_F^2 \\
                                                          &= 4 \| \trot - \eigvecs \|_F^2.
    \end{aligned}
  \end{equation}
  Since, by hypothesis, $\eigvecs$ and $\trot$ are representatives of their orbits
  satisfying $\Oorbdist(\trot, \eigvecs) = \|\trot - \eigvecs\|_F$, we have:
  \begin{align}
    4 \| \trot - \eigvecs \|_F^2 &= 4 \Oorbdist\left(\trot, \eigvecs \right)^2.
  \end{align}
  Applying Lemma \ref{lem:alignment-eigvecs-trot}, we directly obtain:
  \begin{align}
    4 \Oorbdist\left(\trot, \eigvecs \right)^2 &\leq 4(2\sqrt{2dn})^2\frac{\|\dQ\|_2^2}{\lambda_{d+1}(\tQ)^2}.
  \end{align}
  In summary, we have:
  \begin{align}
    \Sorbdist(\trot, \Rinit)^2 &\leq 4(2\sqrt{2dn})^2\frac{\|\dQ\|_2^2}{\lambda_{d+1}(\tQ)^2}.
  \end{align}
  Taking the square root of both sides of the inequality in
  the last line gives:
  \begin{equation}
    \Sorbdist(\trot, \Rinit) \leq \frac{4\sqrt{2dn}\|\dQ\|_2}{\lambda_{d+1}(\tQ)},
  \end{equation}
  which concludes the proof.
\end{proof}

\subsection{An upper bound for the estimation error in Problem \ref{prob:rot-sync}}\label{app:proof-mle-bound}
  We begin following the arguments of \citet[Appendix D.4]{preskitt2018phase}.
  From the optimality of $\Ropt$ we have:
  \begin{equation}
    \begin{aligned}
    \tr(\nQ \trot\transpose \trot) &= \tr(\tQ \trot\transpose \trot) + \tr(\dQ \trot\transpose \trot) \\
                                   &\geq \tr(\tQ {\Ropt}\transpose \Ropt) + \tr(\dQ {\Ropt}\transpose \Ropt) = \tr(\nQ{\Ropt}\transpose\Ropt).
    \end{aligned}
  \end{equation}
  Since $\tr(\tQ \trot\transpose \trot) = 0$, we can rearrange the above
  expression to obtain:
  \begin{equation}
    \tr(\tQ {\Ropt}\transpose \Ropt) \leq \tr(\dQ \trot\transpose \trot) - \tr(\dQ {\Ropt}\transpose \Ropt).
  \end{equation}
  Using the fact that $\tr(\dQ \trot\transpose \trot) = \vect(\trot)\transpose (\dQ \otimes
  I_n) \vect(\trot)$ (and likewise for $\tr(\dQ {\Ropt}\transpose \Ropt)$), we
  have:
  \begin{equation}
    \begin{aligned}
    \tr(\tQ {\Ropt}\transpose \Ropt) &\leq \vect(\trot - \Ropt)\transpose (\dQ \otimes I_n) \vect(\trot + \Ropt) \\
                                     &\leq \|\vect(\trot - \Ropt)\|_2\|\dQ \otimes I_n\|_2 \|\vect(\trot + \Ropt)\|_2 \\
                                     &= \|\trot - \Ropt\|_F \|\dQ\|_2 \|\trot + \Ropt \|_F \\
                                     &\leq 2\sqrt{dn}\|\trot - \Ropt\|_F \|\dQ\|_2. \label{eq:ropt-tq-cost-bound}
    \end{aligned}
  \end{equation}
  In order to lower-bound the right-hand side of \eqref{eq:ropt-tq-cost-bound}
  in terms of the estimation error $\Sorbdist(\trot, \Ropt)$, we will make use
  of the following technical lemma of \citet{rosen2019se}:
  \begin{lem}[Lemma 11 of \citet{rosen2019se}]\label{lem:rosen-proj}
    Let $\rot \in \Orthogonal(d)^n \subset \R^{d \times dn}$ and furthermore let
    $M = \{WR \mid
    W \in \R^{d \times d}\} \subset \R^{d \times dn}$ be the subspace of
    matrices with rows contained in $\image(\rot\transpose)$. Then
    \begin{equation}
      \begin{aligned}
        &\proj_V: \R^{dn} \rightarrow \image(\rot\transpose) \\
        &\proj_V(x) = \frac{1}{n}\rot\transpose\rot x
      \end{aligned}
    \end{equation}
    is the orthogonal projection onto $\image(\rot\transpose)$ with respect to
    the $\ell_2$ inner product, and the map
    \begin{equation}
      \begin{aligned}
        &\proj_M: \R^{d \times dn} \rightarrow M \\
        &\proj_M(X) = \frac{1}{n}X\rot\transpose\rot
      \end{aligned}
    \end{equation}
    which applies $\proj_V$ to the rows of $X$ is the orthogonal projection onto
    $M$ with respect to the Frobenius inner product.
  \end{lem}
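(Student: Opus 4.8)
The plan is to show that each of the two claimed formulas defines a linear map satisfying the three characterizing properties of an orthogonal projection onto the stated subspace: idempotency, self-adjointness with respect to the relevant inner product, and the correct image. The single fact driving every computation is that, since $\rot = (R_1, \dots, R_n)$ with each $R_i \in \Orthogonal(d)$, we have $\rot\rot\transpose = \sum_{i=1}^n R_i R_i\transpose = n I_d$; in particular $\rot$ has full row rank $d$.

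First I would treat $\proj_V$. Writing $P \triangleq \tfrac{1}{n}\rot\transpose\rot \in \R^{dn \times dn}$, symmetry $P\transpose = P$ is immediate, and idempotency follows from the key fact: $P^2 = \tfrac{1}{n^2}\rot\transpose(\rot\rot\transpose)\rot = \tfrac{1}{n^2}\rot\transpose(nI_d)\rot = P$. It then remains to identify $\image(P)$. Since $\image(\rot\transpose\rot) \subseteq \image(\rot\transpose)$ always, while $\rank(\rot\transpose\rot) = \rank(\rot) = d = \dim\image(\rot\transpose)$, the two subspaces coincide, so $\image(P) = \image(\rot\transpose)$. A symmetric idempotent matrix is precisely the orthogonal projector (in the $\ell_2$ inner product) onto its image, which establishes the first claim.

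Next I would treat $\proj_M$, noting that $\proj_M(X) = \tfrac{1}{n}X\rot\transpose\rot = XP$, i.e. $\proj_M$ acts by right-multiplication by $P$. Idempotency is then inherited, $\proj_M^2(X) = XP^2 = XP$. Self-adjointness with respect to the Frobenius inner product $\langle A, B\rangle \triangleq \tr(A\transpose B)$ follows from $P\transpose = P$ and cyclicity of the trace: $\langle \proj_M(X), Y\rangle = \tr(PX\transpose Y) = \langle X, \proj_M(Y)\rangle$. For the image, the factorization $XP = \bigl(\tfrac{1}{n}X\rot\transpose\bigr)\rot$ exhibits $\proj_M(X)$ as $WR$ with $W = \tfrac{1}{n}X\rot\transpose \in \R^{d\times d}$, giving $\image(\proj_M) \subseteq M$; conversely $\proj_M(WR) = W\rot P = W\tfrac{1}{n}(\rot\rot\transpose)\rot = WR$, so $\proj_M$ fixes $M$ pointwise and hence $\image(\proj_M) = M$. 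Together these three properties identify $\proj_M$ as the orthogonal projector onto $M$ in the Frobenius inner product.

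I do not anticipate a genuine obstacle here, as the result is a routine application of the standard characterization of orthogonal projectors; the only steps requiring any thought are the two places where the identity $\rot\rot\transpose = nI_d$ is invoked, namely in establishing idempotency and in verifying that $\proj_M$ restricts to the identity on $M$. The observation that $\proj_M$ is simply right-multiplication by the $\ell_2$-projector $P$ makes the passage from the vector statement to the matrix statement essentially automatic, so once $\proj_V$ is handled the remainder is bookkeeping.
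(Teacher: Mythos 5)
Your proof is correct, but note that there is no in-paper proof to compare it against: this lemma is imported verbatim from \citet{rosen2019se} (their Lemma~11) and is used in the paper as a black box inside the proof of Theorem~\ref{thm:opt-error}, so you have supplied an argument where the paper supplies only a citation. Your verification is sound at every step: the identity $\rot\rot\transpose = nI_d$ holds because the $n$ blocks of $\rot$ are orthogonal matrices; $P = \tfrac{1}{n}\rot\transpose\rot$ is then symmetric and idempotent; the rank argument $\rank(\rot\transpose\rot) = \rank(\rot) = d = \dim\image(\rot\transpose)$ correctly upgrades the inclusion $\image(P) \subseteq \image(\rot\transpose)$ to equality; and the standard characterization (symmetric $+$ idempotent $=$ orthogonal projector onto the image) finishes the $\ell_2$ claim. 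For $\proj_M$, recognizing it as right-multiplication by $P$ makes idempotency automatic, the trace computation $\tr(PX\transpose Y) = \tr(X\transpose YP)$ correctly gives Frobenius self-adjointness, and the two-sided image argument (factoring $XP = (\tfrac{1}{n}X\rot\transpose)\rot$ into $M$, and checking $\proj_M$ fixes $M$ pointwise) is complete. Two cosmetic remarks: the statement's assertion that $\proj_M$ ``applies $\proj_V$ to the rows of $X$'' deserves the one-line observation that a row $x\transpose$ of $X$ maps to $x\transpose P = (Px)\transpose$ precisely because $P$ is symmetric; and your proof implicitly confirms the statement's identification of $M = \{W\rot \mid W \in \R^{d\times d}\}$ with the set of matrices whose rows lie in $\image(\rot\transpose)$, which is worth a sentence if this were to stand alone. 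Neither affects correctness.
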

  Since $\ker(\tQ) = \image(\trot\transpose)$ and $\dim(\image(\trot\transpose))
  = d$, from Lemma \ref{lem:rosen-proj}, we have:
  \begin{equation}
    \tr(\tQ {\Ropt}\transpose \Ropt) \geq \lambda_{d+1}(\tQ)\| P \|_F^2, \label{eq:rosen-p-bound}
  \end{equation}
  where 
  \begin{equation}
    \begin{aligned}
      \Ropt &= K + P \\
      K &= \proj_M(\Ropt) = \frac{1}{n}\Ropt \trot\transpose \trot \\
      P &= \Ropt - \proj_M(\Ropt) = \Ropt - \frac{1}{n}\Ropt \trot\transpose \trot
    \end{aligned}\label{eq:orthogonal-projection}
  \end{equation}

  is an orthogonal decomposition of $\Ropt$ and the rows
  of $P$ are contained in the orthogonal complement of
  $\image(\trot\transpose)$

The following lemma provides a bound on $\Sorbdist(\trot, \Ropt)^2$ in terms of $\|P\|_F^2$.
\begin{lem}\label{lem:sorbdist-vs-p}
  Let $\Ropt$ and $\trot$ be representatives of their orbits such that
  $\Sorbdist(\trot, \Ropt) = \|\trot - \Ropt\|_F$, and $P = \Ropt -
  \proj_M(\Ropt)$ as defined in
  \eqref{eq:orthogonal-projection}. Then:
  \begin{equation}
    \frac{1}{4}\Sorbdist(\trot, \Ropt)^2 \leq \|P\|_F^2.
  \end{equation}
\end{lem}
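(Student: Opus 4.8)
The plan is to reduce this matrix inequality to an elementary scalar inequality about the spectrum of the averaged matrix $W \triangleq \tfrac1n\Ropt\trot\transpose \in \R^{d\times d}$, and then to extract exactly enough structure from the optimal alignment and from the fact that $W$ is a \emph{mean of rotations} to close it. Since by hypothesis $\Sorbdist(\trot,\Ropt)=\|\trot-\Ropt\|_F$ and the decomposition $\Ropt = K + P$ is orthogonal with $\trot \in M$, I would first note $\|\trot - \Ropt\|_F^2 = \|\trot - K\|_F^2 + \|P\|_F^2$. Writing $Q_i \triangleq \Ropt_i\trot_i\transpose \in \SO(d)$ so that $W = \tfrac1n\sum_i Q_i$, right-invariance of the Frobenius norm gives the two clean identities $\Sorbdist(\trot,\Ropt)^2 = \sum_i\|I_d - Q_i\|_F^2 = 2n(d - \tr W)$ and $\|P\|_F^2 = \sum_i\|Q_i - W\|_F^2 = n(d - \|W\|_F^2)$ (the latter being the usual variance-about-the-mean identity). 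The target $\tfrac14\Sorbdist(\trot,\Ropt)^2 \le \|P\|_F^2$ is then equivalent to the single scalar inequality $2\|W\|_F^2 - \tr W \le d$.

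Next I would exploit the optimality of the alignment. Because $G = I_d$ globally minimizes $\|\trot - G\Ropt\|_F^2 = 2nd - 2\tr(G\transpose\trot\Ropt\transpose)$ over $\SO(d)$, the first-order stationarity condition at $G=I_d$ forces the skew part of $\trot\Ropt\transpose$ to vanish; hence $\trot\Ropt\transpose$, and with it $W$, is \emph{symmetric}. Let $w_1 \le \dots \le w_d \in [-1,1]$ be its eigenvalues (the bound $|w_j|\le 1$ following from $\|W\|_2 \le \tfrac1n\sum_i\|Q_i\|_2 = 1$). In the eigenbasis the scalar target reads $\sum_j (2w_j+1)(w_j-1) \le 0$, and each summand is $\le 0$ precisely when $w_j \ge -\tfrac12$. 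Comparing the objective value at $G = I_d$ against that at the $\pi$-rotation negating the $e_j,e_k$ eigenplane of $W$ (an element of $\SO(d)$) yields $w_j + w_k \ge 0$ for every pair, so \emph{at most one} eigenvalue is negative and, if $w_1 = -t < 0$, then $w_j \ge t$ for all $j \ge 2$.

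The remaining ingredient is a \emph{realizability} bound that uses that $W$ is an average of special-orthogonal matrices. The key sublemma I would prove is that for any $Q \in \SO(d)$, $\tr Q \le d - 2 + 2\langle Q e_1, e_1\rangle$: indeed $QF$, where $F = I_d - 2e_1e_1\transpose$ is the Householder reflection in $e_1$, is orthogonal with $\det(QF) = -1$, so it has an eigenvalue $-1$ and therefore $\tr(QF) = \tr Q - 2\langle Qe_1,e_1\rangle \le d-2$. Averaging over $i$ in the eigenbasis of $W$ gives $\tr W \le d - 2 + 2w_1 = d - 2 - 2t$, equivalently $\sum_{j\ge 2}(1 - w_j) \ge 1 + t$. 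In the only nontrivial case the lone negative eigenvalue has $t > \tfrac12$, whence $w_j \ge t > \tfrac12$ and $2w_j+1 \ge 2$ for $j \ge 2$; this gives $\sum_{j\ge2}(2w_j+1)(w_j-1) \le -2\sum_{j\ge2}(1-w_j) \le -2(1+t)$, while the negative term contributes $(2w_1+1)(w_1-1) = 2t^2+t-1$, so the total is at most $(2t-3)(t+1) \le 0$, which is exactly the scalar inequality.

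The main obstacle is precisely this last step: the scalar inequality is \emph{false} term-by-term whenever an eigenvalue dips below $-\tfrac12$, and the optimal-alignment condition alone (yielding only $w_j + w_k \ge 0$) permits eigenvalues as low as $-1$. What rescues the bound is that $W$ is not an arbitrary contraction but a genuine average of rotations, and the reflection/trace estimate quantifies how a large negative eigenvalue forces the remaining eigenvalues down enough to compensate. I would expect getting this quantitative coupling right—rather than the bookkeeping in the reduction—to be the crux, and it is also the place where insisting on $\SO(d)$ (rather than $\Orthogonal(d)$, where alignment already forces all $w_j \ge 0$ and the inequality is immediate) genuinely costs something.
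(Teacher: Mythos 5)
Your proof is correct, and it takes a genuinely different route from the paper's. Both arguments center on the same matrix $W = \tfrac{1}{n}\Ropt\trot\transpose$ (the paper's $X\transpose$) and on the blockwise identity $\|P\|_F^2 = \sum_{i=1}^n\|\Ropt_i\trot_i\transpose - W\|_F^2$, but they part ways immediately afterwards. The paper's proof is a short application of Lemma \ref{lem:so-to-o-rounding}: since $G = I_d$ attains the orbit distance by hypothesis, $\Sorbdist(\trot,\Ropt)^2 \le \|\Ropt - \SOrounded{W}\trot\|_F^2 = \sum_{i=1}^n\|\Ropt_i\trot_i\transpose - \SOrounded{W}\|_F^2 \le 4\sum_{i=1}^n\|\Ropt_i\trot_i\transpose - W\|_F^2 = 4\|P\|_F^2$, so the constant $4$ is just the square of the rounding lemma's factor $2$; no symmetry of $W$, no spectral decomposition, and no use of optimality beyond the first inequality. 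You instead reduce the claim to the scalar inequality $2\|W\|_F^2 - \tr W \le d$ and close it with three structural facts the paper never invokes: first-order optimality of the alignment (symmetry of $W$), comparison against eigenplane $\pi$-rotations (giving $w_j + w_k \ge 0$, hence at most one negative eigenvalue), and the Householder/determinant bound $\tr Q \le d - 2 + 2\langle Qe_1,e_1\rangle$ for $Q \in \SO(d)$, averaged to give $\tr W \le d - 2 + 2w_1$. I checked the pieces — the identities $\Sorbdist(\trot,\Ropt)^2 = 2n(d-\tr W)$ and $\|P\|_F^2 = n(d - \|W\|_F^2)$, the stationarity and eigenplane comparisons, and the final case analysis ending in $(2t-3)(t+1)\le 0$ with $t\le 1$ — and the argument is airtight. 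Your route is heavier but more informative: it isolates exactly where restricting the alignment to $\SO(d)$ (rather than $\Orthogonal(d)$, where the optimally aligned $W$ is automatically positive semidefinite and the inequality is trivial) creates the difficulty, and the quantitative coupling between a large negative eigenvalue and the rest of the spectrum suggests the constant $\tfrac14$ is not tight under these constraints. The paper's route buys brevity and reuse of a lemma already needed for Theorem \ref{thm:spectral-error}, at the cost of that structural insight.
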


\begin{proof}
  Let $X = \frac{1}{n}\trot {\Ropt}\transpose$, so that $K = X\transpose \trot$.
  Expanding the left hand side, we have:
  \begin{equation}
    \begin{aligned}
    \Sorbdist(\trot, \Ropt)^2 &=  \|\Ropt - \trot\|_F^2 \\
                              &\leq \|\Ropt - \SOrounded{X\transpose}\trot\|_F^2,
    \end{aligned}
  \end{equation}
  from the fact that the orbit distance is obtained as the minimum over $G \in
  \SO(d)$ of the quantity $\|\Ropt - G\trot\|_F$, and that by hypothesis this
  minimum is obtained as $\|\Ropt - \trot\|_F$. Breaking up the norm into its
  blockwise summands, and from the orthogonal invariance of the Frobenius norm,
  we can rearrange this expression as follows:
  \begin{equation}
    \begin{aligned}
    \|\Ropt - \SOrounded{X\transpose}\trot\|_F^2 &= \sum_{i = 1} \|\Ropt_i - \SOrounded{X\transpose}\trot_i \|_F^2 \\
                                                 &= \sum_{i = 1}^n \|\Ropt_i \trot_i\transpose - \SOrounded{X\transpose}\|_F^2.
    \end{aligned}
  \end{equation}
  From Lemma \ref{lem:so-to-o-rounding}, we know that each summand in the above
  expression satisfies
  \begin{equation}
    \| \Ropt_i \trot_i \transpose - \SOrounded{X\transpose} \|_F^2 \leq 4 \| \Ropt_i \trot_i \transpose - X\transpose \|_F^2.
  \end{equation}
  Since this bound is satisfied for each summand, the total summation satisfies
  \begin{equation}
    \begin{aligned}
      \sum_{i=1}^n \| \Ropt_i \trot_i \transpose - \SOrounded{X\transpose} \|_F^2 &\leq 4 \sum_{i=1}^n \| \Ropt_i \trot_i \transpose - X\transpose \|_F^2 \\
      &= 4 \sum_{i=1}^n \| \Ropt_i - X\transpose \trot_i \|_F^2 \\
    &= 4 \| \Ropt - X\transpose \trot \|_F^2.
    \end{aligned}
  \end{equation}
  Since $K = X\transpose \trot$, we have:
  \begin{equation}
    \begin{aligned}
    4\| \Ropt -X\transpose \trot \|_F^2 &= 4 \| \Ropt - K \|_F^2 \\
                                        &= 4 \| P \|_F^2,
    \end{aligned}
  \end{equation}
  which gives the desired bound.
\end{proof}

With this result, we are ready to prove Theorem \ref{thm:opt-error}.

\begin{proof}
  From \eqref{eq:rosen-p-bound} and \eqref{eq:ropt-tq-cost-bound}, we have:
  \begin{equation}
    \lambda_{d+1}(\tQ)\|P\|_F^2 \leq 2\sqrt{dn}\|\trot - \Ropt\|_F \|\dQ\|_2. \label{eq:cost-bound-p}
  \end{equation}
  Since, by hypothesis, $\Ropt$ and $\trot$ are the representatives of their
  orbits satisfying $\Sorbdist(\trot, \Ropt) = \|\trot - \Ropt\|_F$, from Lemma
  \ref{lem:sorbdist-vs-p} we have
  \begin{equation}
    \Sorbdist(\trot, \Ropt)^2 \leq 4 \|P\|_F^2. \label{eq:sorbdist-p-bound}
  \end{equation}
  Combining \eqref{eq:sorbdist-p-bound} with \eqref{eq:cost-bound-p}, we obtain:
  \begin{equation}
    \Sorbdist(\trot, \Ropt) \leq \frac{8\sqrt{dn}\|\dQ\|_2}{\lambda_{d+1}(\tQ)},
  \end{equation}
  which is what we intended to show.
\end{proof}

\subsection{An upper bound on $\Sorbdist(\Rinit, \Ropt)$}\label{app:spectral-mle-bound}

In this section, we give a proof of Corollary \ref{cor:rinit-vs-ropt}, bounding
the $\SO(d)^n$ orbit distance between the spectral initialization $\Rinit$ and
the maximum likelihood estimate $\Ropt$. First, we establish as the main
technical lemma a result that the orbit distances $\Sorbdist$ and $\Oorbdist$ on
$\SO(d)^n$ and $\Orthogonal(d)^n$ are \emph{pseudometrics}:

\begin{lem}[Orbit distances are pseudometrics]\label{lem:pseudometrics}
  The orbit distances $\Sorbdist$ and $\Oorbdist$ are \emph{pseudometrics} on
  $\SO(d)^n$ and $\Orthogonal(d)^n$, respectively. In particular, for all $X, Y,
  Z \in \SO(d)^n$, we have:
  \begin{enumerate}
    \item $\Sorbdist(X, X) = 0$ \label{item:identity}
    \item $\Sorbdist(X, Y) = \Sorbdist(Y, X)$ \label{item:symmetry}
    \item $\Sorbdist(X, Z) \leq \Sorbdist(X, Y) + \Sorbdist(Y, Z)$, \label{item:triangle}
  \end{enumerate}
  and likewise for $\Oorbdist$ on $\Orthogonal(d)^n$.
\end{lem}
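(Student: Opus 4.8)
The plan is to verify the three defining properties directly from the definition $\Sorbdist(X, Y) = \min_{G \in \SO(d)} \|X - GY\|_F$, using only that $\SO(d)$ is a group (closed under products and inverses, and containing $I_d$) together with the invariance of the Frobenius norm under left multiplication by an orthogonal matrix, i.e.\ $\|HA\|_F = \|A\|_F$ for $H \in \Orthogonal(d)$. Every step below applies verbatim to $\Oorbdist$ once $\SO(d)$ is replaced by $\Orthogonal(d)$, since the latter is also a group acting by orthogonal transformations; I would therefore prove the claim for $\Sorbdist$ and note that the $\Orthogonal(d)^n$ case follows identically. Property (\ref{item:identity}) is immediate: evaluating the objective at $G = I_d \in \SO(d)$ gives $\|X - X\|_F = 0$, and since the Frobenius norm is nonnegative the minimum is exactly $0$.

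The crux is symmetry (property (\ref{item:symmetry})). For any $G \in \SO(d)$ I would left-multiply by $G\inv = G\transpose$ and invoke orthogonal invariance to obtain
\[
  \|X - GY\|_F = \|G\inv(X - GY)\|_F = \|Y - G\inv X\|_F.
\]
Since $G \mapsto G\inv$ is a bijection of $\SO(d)$ onto itself, minimizing over $G$ on both sides gives $\Sorbdist(X, Y) = \Sorbdist(Y, X)$. This reindexing trick is the only place where the group-inverse structure is essential, and I expect it to be the single genuinely nontrivial observation in the proof.

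For the triangle inequality (property (\ref{item:triangle})), let $G_1, G_2 \in \SO(d)$ attain $\Sorbdist(X, Y)$ and $\Sorbdist(Y, Z)$ respectively (the minima exist because $\SO(d)$ is compact and the objective is continuous). As $G_1 G_2 \in \SO(d)$, the defining minimum together with the triangle inequality for $\|\cdot\|_F$ gives
\[
  \Sorbdist(X, Z) \leq \|X - G_1 G_2 Z\|_F \leq \|X - G_1 Y\|_F + \|G_1(Y - G_2 Z)\|_F,
\]
and applying orthogonal invariance to the last term ($\|G_1(Y - G_2 Z)\|_F = \|Y - G_2 Z\|_F$) shows the right-hand side equals $\Sorbdist(X, Y) + \Sorbdist(Y, Z)$. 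No real obstacle arises anywhere; the only conceptual point worth flagging is that $\Sorbdist$ is a \emph{pseudo}metric rather than a metric, since any two representatives of the same $\SO(d)$ orbit lie at distance zero --- which is exactly why property (\ref{item:identity}) is the weak identity $\Sorbdist(X, X) = 0$ rather than a full point-separation axiom.
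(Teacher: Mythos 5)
Your proof is correct and follows essentially the same route as the paper's: identity via $G = I_d$, symmetry via the reindexing $G \mapsto G^{-1} = G\transpose$ together with orthogonal invariance of the Frobenius norm, and the triangle inequality by composing the two optimal group elements and splitting with the Frobenius triangle inequality. The only (welcome) additions are your explicit justification that the minima are attained (compactness of $\SO(d)$ plus continuity) and the remark on why the result is only a pseudometric, both of which the paper leaves implicit.
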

\begin{proof}
  To simplify the subsequent derivation, we prove the result for the orbit
  distance $\Sorbdist$ on $\SO(d)^n$; the same argument applies \emph{mutatis
    mutandis} to $\Oorbdist$ on $\Orthogonal(d)^n$. A \emph{pseudometric} on
  $\SO(d)^n$ (resp. $\Orthogonal(d)^n$) is any nonnegative function $\SO(d)^n
  \times \SO(d)^n \rightarrow \R_{\geq 0}$ satisfying the properties
  \ref{item:identity}--\ref{item:triangle} \cite{kelley1955general}. To
  establish \ref{item:identity}, we have:
  \begin{equation}
    \Sorbdist(X, X) = \min_{G \in \SO(d)}\|X - GX\|_F = 0,
  \end{equation}
  since $\|A\|_F \geq 0$ for all $A$ and taking $G = I$
  realizes this minimum value.

  For \ref{item:symmetry}, we have:
  \begin{equation}
    \begin{aligned}
      \Sorbdist(X, Y) &= \min_{G \in \SO(d)} \| X - GY \|_F \\
      &= \min_{G \in \SO(d)} \|Y - G\transpose X\|_F
      &= \Sorbdist(Y, X),
    \end{aligned}
  \end{equation}
  where the second line follows from the orthogonal invariance of the Frobenius
  norm, and the last line follows from the fact that since $G\transpose = G^{-1}
  \in \SO(d)$, then $G\transpose$ ranges over all of $\SO(d)$ as $G$ does.

  Finally, to establish \ref{item:triangle}, we aim to prove that for any $X,
  Y, Z \in \SO(d)^n$:
  \begin{equation}
    \Sorbdist(X,Z) \leq \Sorbdist(X,Y) + \Sorbdist(Y,Z).
  \end{equation}
  Suppose the orbit distance $\Sorbdist(X,Y)$ is attained with minimizer
  $G^*_{XY} \in \SO(d)$ and likewise the distance $\Sorbdist(Y,Z)$ is attained with
  minimizer $G^*_{YZ} \in \SO(d)$. Define:
  \begin{equation}
    G' \triangleq G^*_{XY}G^*_{YZ}. \label{eq:gprime}
  \end{equation}
  Now, since $G'$ is itself the product of two elements of $\SO(d)$, we know $G'
  \in \SO(d)$, and therefore:
  \begin{equation}
    \Sorbdist(X,Z) = \min_{G \in \SO(d)} \|X - GZ\|_F \leq \|X - G'Z\|_F.
  \end{equation}
  Examining the right-hand side of this expression, we have:
  \begin{equation}
  \begin{aligned}
    \|X - G'Z\|_F &= \|X - G^*_{XY}Y + G^*_{XY}Y - G'Z\|_F \\
    &\leq \underbrace{\|X - G^*_{XY}Y\|_F}_{\Sorbdist(X,Y)} + \|G^*_{XY}Y - G'Z\|_F, \label{eq:triangle-expanded}
  \end{aligned}
  \end{equation} 
  where the last line follows from the triangle inequality for the Frobenius
  norm. Now, substitution of the definition \eqref{eq:gprime} into the second
  term of \eqref{eq:triangle-expanded} reveals:
  \begin{equation}
    \begin{aligned}
      \|G^*_{XY}Y - G'Z\|_F &= \|G^*_{XY}Y - G^*_{XY}G^*_{YZ}Z\|_F \\
      &= \|Y - G^*_{YZ}Z\|_F\\
      &= \Sorbdist(Y,Z),
    \end{aligned}
  \end{equation}
  where the second line follows from the orthogonal invariance of the Frobenius
  norm. Taken together, these results give:
  \begin{equation}
    \Sorbdist(X,Z) \leq \|X - G'Z\|_F \leq \Sorbdist(X,Y) + \Sorbdist(Y,Z),
  \end{equation}
  which is what we intended to show.
\end{proof}

Lemma \ref{lem:pseudometrics} suggests a straightforward proof of Corollary \ref{cor:rinit-vs-ropt}.
\begin{proof} 
  From the triangle inequality for $\Sorbdist$, we have:
  \begin{equation}
    \Sorbdist(\Rinit, \Ropt) \leq \Sorbdist(\trot, \Rinit) + \Sorbdist(\trot, \Ropt). \label{eq:rot-triangle-ineq}
  \end{equation}
  Substitution of \eqref{eq:spectral-error} and \eqref{eq:opt-error} into
  \eqref{eq:rot-triangle-ineq} gives the desired result.
\end{proof}

\section{Relationship to the method of \citet{moreira2021fast}}\label{app:moreira-comparison}

In their recent work, \citet{moreira2021fast} also propose an estimator for
pose-graph SLAM problems based on eigenvector computations. In this section,
we show that their approach is formally equivalent to the \emph{rotation-only}
variant of the spectral initialization we discuss in Section
\ref{sec:main-results} and therefore has estimation error satisfying the bound
\eqref{eq:ronly-bound}. \citet{moreira2021fast} specifically consider
\emph{unweighted} rotation measurements, which (from an estimation standpoint) is equivalent to considering the
generative model \eqref{eq:gen-model-ra} with \emph{identical} precisions (say $\kappa_{ij} = 1$) for all edges
$(i,j) \in \Edges$.

Their construction begins by considering the matrix $\moreiraR \in \R^{dn \times
  dn}$ with $d \times d$ block $i, j$ given by:
\begin{equation}
  \moreiraR_{ij} = \begin{cases}
    I_d & \text{if } i = j \\
    \nrot_{ij}, &  \edge \in \Edges \\
    0_{d\times d} &  \edge \notin \Edges.
  \end{cases}
  \label{eq:moreira-R}
  \end{equation}
  They observe that for all stationary points $\Rhat \in \SO(d)^n \subset \R^{d
    \times dn}$, there is a corresponding
  matrix $\Lambda \in \R^{dn \times dn}$ such that:
  \begin{equation}
    \underbrace{(\Lambda - \moreiraR)}_{\noisy{S}}\Rhat\transpose = 0,
  \end{equation}
  where $\Lambda$ has the symmetric $d \times d$ block diagonal structure:
  \begin{equation}
    \Lambda = \begin{bmatrix} \Lambda_1 & \cdots & 0 \\ \vdots & \ddots & \vdots \\ 0 & \cdots & \Lambda_n \end{bmatrix}.
  \end{equation}
  In the \emph{noiseless} case where $\moreiraR = \moreiratR$,\footnote{In
    keeping with the notation in the rest of this manuscript, we use the
    notation $\moreiratR$ to denote the measurement matrix \eqref{eq:moreira-R}
    constructed from the \emph{ground-truth} relative rotations $\trot_{ij}$.}
  the matrix $\true{S} = \Lambda - \moreiratR$ is given by \cite[Equation
  14]{moreira2021fast}:
  \begin{equation}\label{eq:moreiraS}
      \true{S} = (\moreiraL \otimes J_d) \circ \moreiratR,
  \end{equation}
  where $\moreiraL$ is the scalar (unweighted) rotational graph Laplacian with
  $i,j$ entry:
  \begin{equation}\label{eq:moreiraL}
    \moreiraL_{ij} = \begin{cases} \incEdges(i),& i = j, \\
      -1,& \edge \in \Edges, \\
    0,& \edge \notin \Edges,\end{cases}
  \end{equation}
  $J_d \in \R^{d \times d}$ is an all-ones matrix, and $\circ$ denotes the
  Hadamard product. Direct comparison of \eqref{eq:moreiraL} with
  \eqref{eq:laprotw} reveals that $\moreiraL$ is equivalent to $\LapRotW$ when
  $\kappa_{ij}=1$ for all $\edge \in \Edges$. Expanding \eqref{eq:moreiraS}, we
  have:
  \begin{equation}\label{eq:moreiraS-cases}
    \true{S}_{ij} = \begin{cases} \incEdges(i) I_d,& i = j, \\
      - \trot_{ij},& \edge \in \Edges,\\
    0_{d\times d},& \edge \notin \Edges \end{cases}.
  \end{equation}
  Comparing the definition of $\MeasRotConLap$ in \eqref{eq:lgrho} and
  $\true{S}$ in \eqref{eq:moreiraS-cases}, it is straightforward to verify that
  $\true{S} = \TrueRotConLap$ when $\kappa_{ij} = 1$. From the equivalence of
  $\true{S}$ and $\TrueRotConLap$, it follows that $\true{S} \succeq 0$ and
  $\trot\transpose \in \ker(\true{S})$, so the ground-truth rotations $\trot$
  can be recovered by computing the $d$ eigenvectors of $\true{S}$ corresponding
  to the smallest eigenvalues of $\true{S}$.\footnote{Recall from Section
    \ref{sec:spectral-init} that $\trot$ lie in $\ker(\TrueRotConLap)$ and from
    Section \ref{sec:problem-formulation} that $\TrueRotConLap \succeq 0$. The
    claim then follows from the equivalence of $\true{S}$ and $\TrueRotConLap$.}
  
  In the case of noisy measurements, \citet{moreira2021fast} propose to compute,
  as an approximation, the eigenvectors of $\noisy{S} = (\moreiraL \otimes J_3)
  \circ \moreiraR$, which has $d \times d$ blocks given by:
  \begin{equation}\label{eq:moreira-noisyS-cases}
    S_{ij} = \begin{cases} \incEdges(i) I_d,& i = j, \\
      - \nrot_{ij},& \edge \in \Edges,\\
      0_{d\times d},& \edge \notin \Edges. \end{cases}
    \end{equation}
    The justification given for this approximation is that, in the high
    signal-to-noise ratio regime, there ought to exist $R \in \SO(d)^n$ such
    that $\noisy{S}R \approx 0$. Once again, however, directly comparing
    definitions reveals that the quantity $(\moreiraL \otimes J_3) \circ
    \moreiraR$ is identical to $\MeasRotConLap$ with $\kappa_{ij} = 1$ (cf.\
    equations \eqref{eq:moreira-noisyS-cases} and \eqref{eq:lgrho}).
    Consequently, \citet{moreira2021fast}'s method is actually a
    \emph{particular instance} of the spectral estimator we propose in Section
    \ref{sec:spectral-init}, corresponding to the special case in which all
    rotational measurements have \emph{equal weights} and the translational
    measurements have been discarded (i.e.\ the rotation-only case discussed in
    Section \ref{sec:main-results}). Moreover, viewing this approach through the
    lens of the spectral relaxation in Problem \ref{prob:spectral-relaxation}
    provides formal justification for the method and allows us to derive the
    explicit performance guarantees given in this paper.



\bibliographystyle{plainnat}
\bibliography{references}

\end{document}